\documentclass{article}

\usepackage[preprint]{neurips_2026}

\usepackage[utf8]{inputenc} 
\usepackage[T1]{fontenc}    
\usepackage{hyperref}       
\usepackage{url}            
\usepackage{booktabs}       
\usepackage{amsfonts}       
\usepackage{nicefrac}       
\usepackage{microtype}      
\usepackage{xcolor}         

\usepackage{amsmath}
\usepackage{amsthm}
\usepackage{graphicx}
\usepackage{subcaption}

\newtheorem{theorem}{Theorem}[section]
\newtheorem{corollary}[theorem]{Corollary}
\newtheorem{lemma}[theorem]{Lemma}
\newtheorem{assumption}[theorem]{Assumption}
\newtheorem{proposition}[theorem]{Proposition}
\newtheorem{remark}[theorem]{Remark}

\newcommand{\R}{\mathbb{R}}
\newcommand{\E}{\mathbb{E}}
\newcommand{\Prob}{\mathbb{P}}
\newcommand{\norm}[1]{\left\lVert #1\right\rVert}
\DeclareMathOperator{\Cov}{Cov}
\DeclareMathOperator{\Var}{Var}
\DeclareMathOperator{\tr}{tr}

\title{Self-Attention as a Covariance Readout: A Unified View of In-Context Learning and Repetition}

\author{
  Haoren Xu \\
  Fudan University \\
  \texttt{haorenxu25@m.fudan.edu.cn} \\
  \And
  Guanhua Fang \\
  Fudan University \\
  \texttt{fanggh@fudan.edu.cn} \\
}

\begin{document}

\maketitle

\begin{abstract}
Large language models (LLMs) exhibit two striking and ostensibly unrelated behaviours: in-context learning (ICL) and repetitive generation. In both, the model behaves as though it had summarised the context into a population-level statistic and discarded token-level detail. We ask whether this ``summarisation and forgetting'' can be derived from the attention mechanism itself, and answer in the affirmative. Under stationary, ergodic and elliptical inputs, the softmax attention output converges almost surely to $\Theta_V\Sigma\Theta_K^{\top}\Theta_Q x_t$, where $\Sigma$ is the input covariance; the long-context limit is therefore a linear readout of the input's second-order statistics. Two consequences follow. (i) For in-context linear regression, a single softmax head can implement one step of population gradient descent. Stacking such heads with residual connections iterates this update and implements multiple gradient descent steps. (ii) Propagated across an $L$-layer transformer, this readout drives the terminal hidden state at the parametric $1/t$ rate to a deterministic function of the current token alone, so that autoregressive generation collapses asymptotically to a first-order Markov chain whose attracting orbits furnish a structural account of repetition and mode collapse. The two phenomena thus emerge as facets of a single covariance-readout principle.
\end{abstract}

\section{Introduction}
\label{sec:intro}

Modern large language models built on transformers \citep{vaswani2017attention} display behaviors that the architecture itself does not obviously predict. In-context learning (ICL) and repetitive generation have attracted particular attention.

When a pre-trained model is given a few input--output pairs in its prompt followed by a query input, it frequently produces outputs consistent with the demonstrated mapping, without any parameter update \citep{brown2020language}. This raises the question of how the model extracts the mapping. Existing explanations include implicit gradient descent on the demonstrations \citep{von2023transformers,akyurek2022learning,dai2023can}; implicit Bayesian inference over a latent task variable \citep{xie2021explanation,wang2023large}. A common thread across these interpretations is that the model reads part of its context into an internal regime---a latent distribution, rule, or task descriptor---that governs downstream predictions, while the specific identities of the demonstrations become secondary.

When the same models are sampled autoregressively, they are prone to fall into loops, producing near-verbatim repetitions of recent phrases, stylistic motifs, or topics \citep{holtzman2019curious,welleck2019neural,fu2021theoreticalanalysisrepetitionproblem,xu2022learning}. The model appears to settle into a persistent theme and fail to escape it; aggressive decoding schemes such as nucleus sampling \citep{holtzman2019curious} or objective modifications such as unlikelihood training \citep{welleck2019neural} mitigate but do not eliminate the effect. In the language of the preceding paragraph, the model behaves as if it has again inferred a regime---here, a persistent theme---and is unable to break out of it.

Though surface manifestations differ, both phenomena share a structural signature: the model extracts a regime from its context and then largely forgets which specific tokens induced that regime. In ICL, the regime is the latent input--output mapping implied by the demonstrations; the precise identities of the demonstrations are unimportant provided they pin down the mapping. In repetitive generation, the regime is a persistent theme or motif. In both cases, the model behaves as if it had summarized the context into a population-level statistic and discarded token-level detail. The central question of this paper is whether this ``summarization and forgetting'' can be derived from the attention mechanism itself.

We show that this summarization and forgetting is precisely what a softmax attention head performs. Under stationary, ergodic assumptions and weak-dependence conditions on the input process, the attention output converges to an exponentially tilted population mean that depends on the input only through its covariance. In the long-context limit, an attention head therefore acts as a covariance readout: a linear functional of the current token $x_t$ whose matrix is determined jointly by the attention parameters $(\Theta_Q,\Theta_K,\Theta_V)$ and the input covariance $\Sigma$. Both ICL and repetitive generation follow from this single principle. For ICL, we exhibit a parameter choice, under which a single softmax head can implement one step of population gradient descent. Stacking such heads with residual connections iterates this update and implement multiple gradient descent steps. For repetition, we show that propagating this readout through $L$ stacked layers produces, in the long-context limit, a deterministic position-wise map of the current token; autoregressive generation from such a composite is asymptotically a first-order Markov chain, whose attracting orbits provide a structural account of repetition and mode collapse.

Our contributions are as follows:
\begin{enumerate}
\item We derive closed-form directional limits under rotational and elliptical inputs, exposing the joint role of $(\Theta_Q,\Theta_K,\Theta_V)$ and $\Sigma$ as a covariance readout. We prove a parametric $O(\tau_\alpha\,\tr(\Sigma)/n)$ $L^2$ concentration rate under sub-Gaussian strongly mixing inputs, quantifying $\tr(\Sigma)$ as the effective cost of dimension.
\item We specialize the covariance-readout principle to in-context linear regression. A single head with distribution-determined projections realizes one step of population gradient descent, while a residual stack of such heads iterates this update across depth and progressively approaches the Bayes-optimal linear predictor.
\item We propagate the single-head result through depth and autoregressive decoding, obtaining a first-order Markov-chain limit for generation and connecting its attractor structure to repetition phenomena.
\end{enumerate}

\section{Related work}
\label{sec:related}

\paragraph{Theoretical analyses of ICL.}
A first strand of work treats ICL as learned optimization inside the forward pass. \citet{von2023transformers} and \citet{akyurek2022learning} construct explicit transformer weights that implement one or more steps of gradient descent on the demonstration loss; \citet{dai2023can} frame the phenomenon as meta-optimization; \citet{zhang2024trained} study the trained solutions rigorously in a linear setting. A second strand interprets ICL as implicit Bayesian inference over a latent task variable, with the prompt acting as evidence \citep{xie2021explanation,wang2023large}. A third strand emphasizes mechanistic circuits: \citet{olsson2022context} identify induction heads as a token-level pattern-matching substrate, while \citet{elhage2021mathematical} develop a compositional account of attention-based computation. Data-distributional accounts \citep{chan2022data} situate ICL as an emergent property of training on distributions with burstiness and long-tailed structure. Our analysis occupies a complementary regime: we treat the attention head as a statistical estimator operating on a stationary ergodic input stream and exhibit a parameterization that implements gradient descent.

\paragraph{Repetition and degeneration.}
Repetition is a long-standing failure mode of neural text generation. \citet{holtzman2019curious} analyze likelihood-maximizing decoding and propose nucleus sampling; \citet{welleck2019neural} introduce unlikelihood training to explicitly penalize recently emitted tokens. \citet{fu2021theoreticalanalysisrepetitionproblem} provide a token-level explanation through a high-inflow phenomenon in the induced transition graph, and \citet{xu2022learning} analyze repetition loops from a self-reinforcement perspective. These works operate at the decoding or token-statistics level and largely treat the underlying model as a black-box conditional distribution. Our Markov-closure result is complementary: we show that the conditional distribution itself collapses, in the long-context limit, to a first-order kernel determined by a depth-$L$ composition of covariance readouts, so that repetition and mode collapse become structural properties of the ergodic transformer, not artifacts of any particular sampler. This also reframes the high-inflow diagnosis of \citet{fu2021theoreticalanalysisrepetitionproblem}: tokens with many incoming transitions in the generated graph are those lying in the basin of attraction of the depth-$L$ context-free map.

\section{Theoretical Framework}

\subsection{Attention as a softmax barycenter}
\label{sec:setup}

\paragraph{Notation.}
For nonzero $u,v$ in a common Euclidean space, $\cos(u,v):=u^\top v/(\norm{u}\,\norm{v})$. Vector norms are Euclidean and matrix norms are the corresponding operator norms.

\paragraph{Attention layer.}
Fix integers $d,d_k,d_v\ge 1$ for the input, key/query, and value dimensions. A causal single-head attention layer maps $(x_j)_{j\ge 1}\subset\R^d$ to outputs via parameters $\Theta_Q,\Theta_K\in\R^{d_k\times d}$ and $\Theta_V\in\R^{d_v\times d}$:
\[
q_t=\Theta_Q x_t,\quad k_j=\Theta_K x_j,\quad v_j=\Theta_V x_j,\quad
y_t=\sum_{j=1}^t a_{tj}v_j,\quad
a_{tj}=\frac{e^{s_{tj}}}{\sum_{i\le t}e^{s_{ti}}},\quad s_{tj}=\frac{q_t^\top k_j}{\sqrt{d_k}}.
\]

\paragraph{Barycenter factorization.}
Because $v_j=\Theta_V x_j$ is linear in $x_j$, the projection commutes with softmax weighting, giving
\begin{equation}\label{eq:barycenter}
y_t=\Theta_V\,\bar X_t(q_t),\qquad
\bar X_t(q):=\frac{\sum_{j\le t}x_j\,e^{q^\top\Theta_K x_j/\sqrt{d_k}}}{\sum_{i\le t}e^{q^\top\Theta_K x_i/\sqrt{d_k}}}.
\end{equation}
The attention output is thus the value projection of an exponentially tilted empirical mean of the inputs, with tilt governed by $q^\top\Theta_K x_j/\sqrt{d_k}$. Analysis of $y_t$ reduces to analysis of the conditional mean $\bar X_t(q)$; once the direction of this mean is known, multiplication by $\Theta_V$ delivers the attention output.

\paragraph{Probabilistic framework.}
We model $(x_j)_{j\ge 1}$ as a random process on $\R^d$. For $q\ne 0$, set $p:=\Theta_K^\top q$ and define
\[
\bar X_n(q):=\frac{\sum_{j=1}^n x_j e^{p^\top x_j/\sqrt{d_k}}}
{\sum_{i=1}^n e^{p^\top x_i/\sqrt{d_k}}},\qquad
\Lambda(u):=\log\E\,e^{u^\top x_1}\text{ (wherever finite)}.
\]
The identification of the ergodic limit depends only on the marginal law of $x_1$; rate statements depend additionally on the dependence structure.

The following elementary lemma, used repeatedly to convert $L^2$ errors into directional (cosine) errors, is proved in Appendix~\ref{app:elementary}.

\begin{lemma}[Geometric stability]\label{lem:stability}
Let $u\ne 0$ and $\norm e\le\eta\,\norm u$ with $\eta\in[0,1)$. Then $u+e\ne 0$ and $\cos(u+e,u)\ge\sqrt{1-\eta^2}$.
\end{lemma}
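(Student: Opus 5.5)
The plan is a direct computation: first show that $u+e$ is nonzero, then bound the cosine through the orthogonal decomposition of $e$ relative to $u$.

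\emph{Nonvanishing.} By the reverse triangle inequality,
\[
\norm{u+e}\ge\norm u-\norm e\ge(1-\eta)\norm u>0,
\]
since $\eta<1$ and $u\ne0$. Hence the cosine is well defined.

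\emph{Decomposition.} Write $e=\alpha u+w$ with $w\perp u$ and $\alpha=u^\top e/\norm u^2$. Then
\[
|\alpha|\norm u\le\norm e\le\eta\norm u,\qquad\text{so}\qquad |\alpha|\le\eta<1 \text{ and } 1+\alpha>0.
\]
Using $u^\top(u+e)=(1+\alpha)\norm u^2$ and $\norm{u+e}^2=(1+\alpha)^2\norm u^2+\norm w^2$, we get
\[
\cos(u+e,u)=\frac{1+\alpha}{\sqrt{(1+\alpha)^2+\norm w^2/\norm u^2}}.
\]
The constraint on $e$ reads $\alpha^2\norm u^2+\norm w^2\le\eta^2\norm u^2$. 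Setting $s:=\norm w/\norm u$, the problem reduces to minimising $f(\alpha,s)=(1+\alpha)/\sqrt{(1+\alpha)^2+s^2}$ over $\alpha^2+s^2\le\eta^2$ with $1+\alpha>0$.

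\emph{Minimisation.} The function $f$ is decreasing in $s$, so we may take $s^2=\eta^2-\alpha^2$, which gives
\[
\cos^2(u+e,u)\ge g(\alpha):=\frac{(1+\alpha)^2}{1+2\alpha+\eta^2}.
\]
This is the step I expect to need care. Define $h(\alpha):=(1+\alpha)^2-(1-\eta^2)(1+2\alpha+\eta^2)$, so that $g\ge 1-\eta^2$ is equivalent to $h\ge0$. Expanding,
\[
h(\alpha)=\alpha^2+2\alpha\eta^2+\eta^4=(\alpha+\eta^2)^2\ge0.
\]
The minimum is attained at $\alpha=-\eta^2$, which satisfies $|\alpha|\le\eta$; this is the tangent-line configuration.

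\emph{Conclusion.} We have $\cos^2(u+e,u)\ge1-\eta^2$, and the cosine is positive because $1+\alpha>0$. Taking square roots gives $\cos(u+e,u)\ge\sqrt{1-\eta^2}$.

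As a geometric cross-check, the sine of the angle between $u+e$ and $u$ equals the distance from $u+e$ to $\operatorname{span}(u)$ divided by $\norm{u+e}$. Over the ball of radius $\eta\norm u$ around $u$, this ratio is maximised at the tangency point, where it equals $\eta$. Either route gives the bound.
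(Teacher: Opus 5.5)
Your proof is correct and complete, but it takes a different route from the paper, and the paper's route does not go through. The paper bounds the numerator via $u^\top(u+e)\ge\norm u(\norm u-\norm e)$, which gives $\cos(u+e,u)\ge(\norm u-\norm e)/\norm{u+e}$. It then asserts that ``expanding'' this yields $\cos^2(u+e,u)\ge 1-(\norm e/\norm u)^2$.

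That last step cannot work, because the intermediate bound has already lost too much. Take $e\perp u$ with $\norm e=\eta\norm u$. The intermediate bound then equals $(1-\eta)^2/(1+\eta^2)$, which is strictly below $1-\eta^2$ for every $\eta\in(0,1)$. So no manipulation of it recovers the claim, even though the lemma itself is true.

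Your orthogonal decomposition avoids this loss by tracking the cosine exactly. Your identity $h(\alpha)=(\alpha+\eta^2)^2$ is exactly the mechanism the paper's expansion step was missing.

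You can also drop the monotonicity-in-$s$ reduction. With $r:=\norm e/\norm u$ you have $\norm w^2/\norm u^2=r^2-\alpha^2$ exactly. Hence $\cos^2(u+e,u)=(1+\alpha)^2/(1+2\alpha+r^2)$, where the denominator is $\norm{u+e}^2/\norm u^2>0$. The same perfect-square identity with $r$ in place of $\eta$ then gives $\cos^2(u+e,u)-(1-r^2)=(\alpha+r^2)^2/(1+2\alpha+r^2)\ge 0$. This is the sharper bound $\cos^2(u+e,u)\ge 1-r^2\ge 1-\eta^2$ that the paper states as its intermediate, and it follows in one line.
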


\subsection{Ergodic alignment of the attention output}
\label{sec:ergodic}

We now identify the long-context limit of $y_t$. The argument proceeds in three steps: an unconditional ergodic law, exact directional alignment under rotational input symmetry, and a joint-covariance readout under elliptical inputs.

\subsubsection{Ergodic law of large numbers}

\begin{theorem}[Ergodic LLN]\label{thm:lln}
Let $(x_j)_{j\ge 1}$ be stationary and ergodic on $\R^d$, and fix $q\ne 0$ with $p:=\Theta_K^\top q$. Assume $\E\,e^{p^\top x_1/\sqrt{d_k}}<\infty$ and $\E[\norm{x_1}e^{p^\top x_1/\sqrt{d_k}}]<\infty$. Then almost surely
\[
\bar X_n(q)\xrightarrow{\textup{a.s.}}m_x(q):=\frac{\E[x_1 e^{p^\top x_1/\sqrt{d_k}}]}{\E[e^{p^\top x_1/\sqrt{d_k}}]}.
\]
If moreover $\Lambda$ is finite on a neighborhood of $p/\sqrt{d_k}$, then $m_x(q)=\nabla\Lambda(p/\sqrt{d_k})$.
\end{theorem}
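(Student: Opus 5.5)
The plan is to write $\bar X_n(q)$ as a ratio of two ergodic averages and apply Birkhoff's ergodic theorem to numerator and denominator separately. Set $w(x):=e^{p^\top x/\sqrt{d_k}}$. Dividing top and bottom of the defining ratio by $n$ gives
\[
\bar X_n(q)=\frac{N_n}{D_n},\qquad N_n:=\frac1n\sum_{j=1}^n x_j\,w(x_j),\qquad D_n:=\frac1n\sum_{j=1}^n w(x_j).
\]
Stationarity and ergodicity of $(x_j)$ pass to the processes $(w(x_j))_j$ and $(x_j w(x_j))_j$, since each is a fixed measurable function applied coordinatewise. More precisely, both are factors of the shift on the path space of $(x_j)$, so each remains stationary and ergodic.

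By hypothesis, $\E\,w(x_1)<\infty$ and $\E\norm{x_1}w(x_1)<\infty$. The second condition makes every coordinate of $x_1 w(x_1)$ integrable. Birkhoff's theorem then gives, almost surely, $D_n\to\E\,w(x_1)$ and $N_n\to\E[x_1w(x_1)]$ coordinatewise, hence in norm. Since $w>0$ everywhere, $\E\,w(x_1)>0$. On the intersection of these two full-measure events, the map $(a,b)\mapsto a/b$ is continuous at $(\E[x_1 w],\E w)$ because $b\neq0$ there. Therefore $\bar X_n(q)\to m_x(q)$ almost surely.

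For the identification $m_x(q)=\nabla\Lambda(p/\sqrt{d_k})$, let $M(u):=\E e^{u^\top x_1}$. Suppose $M$ is finite on a neighborhood $U$ of $u_0:=p/\sqrt{d_k}$. Choose a ball $B(u_0,2r)\subset U$. For $u\in B(u_0,r)$ we differentiate under the expectation, which needs an integrable dominating function for $\norm{x}e^{u^\top x}$. The key bound is
\[
\norm{x}e^{u^\top x}\le \frac{C}{r}\sum_{i=1}^{d}\bigl(e^{(u_0+r e_i)^\top x}+e^{(u_0-r e_i)^\top x}\bigr)
\]
uniformly for $u\in B(u_0,r/\sqrt d)$. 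It follows from $\norm{x}\le\sqrt d\max_i|x_i|$ together with $|s|e^{a}\le r^{-1}(e^{a+rs}+e^{a-rs})$. The exact constants must be adjusted by shrinking the radius appropriately. Each term on the right is integrable because the points $u_0\pm re_i$ lie in $U$.

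Dominated convergence then shows that $M$ is differentiable at $u_0$ with $\nabla M(u_0)=\E[x_1e^{u_0^\top x_1}]$. Since $\Lambda=\log M$ and $M(u_0)>0$, we get $\nabla\Lambda(u_0)=\nabla M(u_0)/M(u_0)=m_x(q)$.

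The main obstacle is this domination step, since the neighborhood hypothesis is all that is available; the ergodic part is routine. A cleaner alternative is to use convexity of $M$. Alternatively, one can apply the standard fact that a moment generating function finite on an open set is real-analytic there, and cite it directly.
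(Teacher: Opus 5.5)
Your proof is correct and follows the paper's route: Birkhoff's theorem applied separately to the tilted numerator and the strictly positive denominator, then passing to the ratio, and differentiation of the moment generating function under the integral sign for the gradient identity. Your domination step is more explicit than the paper's one-line appeal, and as you noted it needs the radius shrunk: applying the scalar inequality with $a=u^\top x$ gives exponents centred at $u$, not $u_0$, and the fix is to take $\|u-u_0\|\le r/(2\sqrt{d})$ and write $e^{u^\top x}=e^{u_0^\top x}e^{(u-u_0)^\top x}$, since then $\|x\|e^{(u-u_0)^\top x}\le (2\sqrt{d}/r)\,e^{r\|x\|_\infty}$ yields your bound with $C=2\sqrt{d}$.
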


\begin{proof}
Birkhoff's ergodic theorem, applied separately to the stationary ergodic, integrable sequences $(x_j\,e^{p^\top x_j/\sqrt{d_k}})_{j\ge 1}$ and $(e^{p^\top x_j/\sqrt{d_k}})_{j\ge 1}$, yields almost-sure convergence of the respective Cesàro averages to their expectations, the second of which is strictly positive. Taking the ratio gives the first claim.
Differentiation under the integral sign, valid on the interior of the finiteness region of $\Lambda$, gives $\nabla\Lambda(u)=\E[x_1 e^{u^\top x_1}]/\E[e^{u^\top x_1}]$; setting $u=p/\sqrt{d_k}$ yields the second.
\end{proof}

Since $y_t=\Theta_V\bar X_t(q_t)$ by~\eqref{eq:barycenter}, Theorem~\ref{thm:lln} lifts verbatim: $y_t\to\Theta_V m_x(q_t)$ almost surely. The remainder of this section is about identifying the direction of $m_x(q)$.

\subsubsection{Exact alignment under rotational invariance}
\label{ssec:rotational}

The simplest identifying structure is rotational invariance of the marginal input law.

\begin{theorem}[Rotational-invariance alignment]
\label{thm:rotational}
Assume the hypotheses of Theorem~\ref{thm:lln}, $x_1\not\equiv 0$ almost surely, and $x_1\stackrel{d}{=}Rx_1$ for every $R\in O(d)$. Then for every $q\ne 0$ with $p:=\Theta_K^\top q\ne 0$,
\[
m_x(q)=\gamma(\norm p)\,\Theta_K^\top q,\qquad \gamma(\norm p)>0,
\]
and consequently $\cos(y_t,\Theta_V\Theta_K^\top\Theta_Q x_t)\xrightarrow{\textup{a.s.}}1$ whenever $\Theta_V\Theta_K^\top\Theta_Q x_t\ne 0$.
\end{theorem}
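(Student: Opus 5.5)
The plan is to first pin down the direction of $m_x(q)$ using a symmetry argument, and then pass that direction to $y_t$ through Theorem~\ref{thm:lln} and continuity of the cosine. Write $c:=1/\sqrt{d_k}$ and $w(x):=e^{c\,p^\top x}$. By Theorem~\ref{thm:lln}, $m_x(q)=\E[x_1 w(x_1)]/\E[w(x_1)]$, and the denominator is strictly positive and finite.

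\textbf{Symmetry step.} Take any $R\in O(d)$ with $R^\top p=p$, i.e.\ $R$ fixes $p$. Since $x_1\stackrel{d}{=}Rx_1$,
\[
\E[x_1 w(x_1)]=\E[Rx_1\, e^{c\,p^\top R x_1}]=R\,\E[x_1 e^{c\,(R^\top p)^\top x_1}]=R\,\E[x_1 w(x_1)].
\]
So $N:=\E[x_1w(x_1)]$ is fixed by every orthogonal map that fixes $p$. For $d\ge 2$, take $R$ to be the reflection across the hyperplane containing $p$ whose normal is a unit vector $u\perp p$. Then $RN=N$ forces $u^\top N=0$. Since this holds for every $u\perp p$, we get $N\in\operatorname{span}(p)$. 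For $d=1$ this is automatic. Hence $N=\beta p$ with $\beta=p^\top N/\norm p^2$.

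\textbf{Positivity.} We need
\[
p^\top N=\E[(p^\top x_1)e^{c\,p^\top x_1}]>0.
\]
Let $Z:=p^\top x_1$. Rotational invariance makes $Z$ symmetric about $0$ (use $R=-I$). Therefore
\[
\E[Z e^{cZ}]=\E[Z\sinh(cZ)].
\]
The integrand $Z\sinh(cZ)$ is nonnegative, and it is strictly positive whenever $Z\ne 0$. It remains to show $\Prob(Z\ne0)>0$. If instead $Z=0$ almost surely, then rotating gives $v^\top x_1=0$ almost surely for every direction $v$. Applying this to a basis yields $x_1=0$ almost surely, contradicting $x_1\not\equiv0$.

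This gives $\gamma=\beta/\E[w]>0$. It depends on $p$ only through $\norm p$: replacing $p$ by $Rp$ and using invariance leaves both $\E[(p^\top x)e^{cp^\top x}]/\norm p^2$ and $\E[w]$ unchanged.

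\textbf{Consequence for $y_t$.} By~\eqref{eq:barycenter}, $y_t=\Theta_V\bar X_t(q_t)$. The limit direction is $\Theta_V m_x(q_t)=\gamma\,\Theta_V\Theta_K^\top\Theta_Q x_t$, which is a positive multiple of the target. When this vector is nonzero, Lemma~\ref{lem:stability} converts the almost sure convergence of $\Theta_V\bar X_t$ to it into cosine convergence to $1$. Take $u=\Theta_V m_x$ and $e=\Theta_V(\bar X_t-m_x)$, so that $\eta\to0$.

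\textbf{Main obstacle.} The delicate point is that $q_t=\Theta_Q x_t$ varies with $t$, while Theorem~\ref{thm:lln} is stated for a fixed $q$. I would treat this statement as the paper evidently intends: condition on (or freeze) the current query $x_t=x$, apply Theorem~\ref{thm:lln} with that fixed $q$, and read the conclusion in that sense. A fully uniform version would need a uniform ergodic law over $q$ in compact sets. That could come from equicontinuity of $q\mapsto e^{c\,q^\top\Theta_K x}$ together with a countable dense set of $q$'s, which I would mention but not develop.
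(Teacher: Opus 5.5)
Your proposal is correct and follows the paper's proof essentially step for step. Equivariance under orthogonal maps fixing $p$ confines $m_x(q)$ to $\mathrm{span}\{p\}$; your explicit reflections $I-2uu^\top$ with $u\perp p$ just make the paper's ``take $R$ to fix $p$'' concrete. The choice $R=-I$ then turns $p^\top m_x(q)$ into $\E[Y\sinh(Y/\sqrt{d_k})]/\E[e^{Y/\sqrt{d_k}}]>0$, where your basis argument replaces the paper's uniform-on-the-sphere remark for $\Prob(Y\ne 0)>0$, and the positive scalar drops out of the cosine. The frozen-query reading you adopt for the moving query $q_t=\Theta_Q x_t$ is exactly how the paper itself passes from Theorem~\ref{thm:lln} to $y_t\to\Theta_V m_x(q_t)$, which it does without further comment.
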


\begin{proof}
Define $T(p):=\E[x_1 e^{p^\top x_1/\sqrt{d_k}}]/\E[e^{p^\top x_1/\sqrt{d_k}}]$, so $m_x(q)=T(\Theta_K^\top q)$.

\emph{Direction.} For every orthogonal $R\in O(d)$, rotational invariance gives, via the change of variable $x\mapsto Rx$,
\[
T(p)=\E[Rx_1\,e^{p^\top Rx_1/\sqrt{d_k}}]/
\E[e^{p^\top Rx_1/\sqrt{d_k}}]=R\,T(R^\top p),
\]
so $T$ is $O(d)$-equivariant: $T(Rp)=R\,T(p)$. Taking $R$ to fix $p$ yields $T(p)=RT(p)$; hence $T(p)\in\mathrm{span}\{p\}$, say $T(p)=\gamma_p p$. Equivariance then forces $\gamma_p$ to depend only on $\norm p$.

\emph{Positivity.} Taking $R=-I$ yields $x_1\stackrel{d}{=}-x_1$, so $Y:=p^\top x_1$ is symmetric. Then
\[
p^\top T(p)=\frac{\E[Y\,e^{Y/\sqrt{d_k}}]}{\E[e^{Y/\sqrt{d_k}}]}
=\frac{\E[Y\sinh(Y/\sqrt{d_k})]}{\E[e^{Y/\sqrt{d_k}}]},
\]
and $y\sinh(y/\sqrt{d_k})\ge 0$ with equality only at $y=0$. Rotational invariance and $x_1\not\equiv 0$ force the conditional law of $x_1/\norm{x_1}$ given $\{x_1\ne 0\}$ to be uniform on $S^{d-1}$, so $Y$ is not almost surely zero (since $p\ne 0$). Therefore $p^\top T(p)>0$, i.e.\ $\gamma(\norm p)>0$.

Finally $y_t\to\Theta_V m_x(q_t)=\gamma(\norm{p_t}) \Theta_V\Theta_K^\top\Theta_Q x_t$, and the positive scalar $\gamma$ does not affect the cosine.
\end{proof}

\subsubsection{Joint-covariance limit under elliptical inputs}
\label{ssec:elliptical}

Relaxing rotational symmetry to ellipticity produces a covariance-rotated alignment.

\begin{theorem}[Joint-covariance limit]
\label{thm:elliptical}
Assume the hypotheses of Theorem~\ref{thm:lln}, and suppose $x_1\stackrel{d}{=}BZ$ with $B\in\R^{d\times d}$ deterministic and invertible, $Z\not\equiv 0$ rotationally invariant, and $\E e^{u^\top Z}<\infty$ on a neighborhood of $u=B^\top\Theta_K^\top q/\sqrt{d_k}$. Let $\Sigma:=BB^\top\succ 0$. Then
\[
m_x(q)=\gamma_\Sigma(q)\,\Sigma\,\Theta_K^\top q,\qquad\gamma_\Sigma(q)>0,
\]
where $\gamma_\Sigma(q)$ depends on $B$ only through $\Sigma$ and the quadratic form $q^\top\Theta_K\Sigma\Theta_K^\top q$. Consequently $y_t\xrightarrow{\textup{a.s.}}\gamma_\Sigma(q_t)\,\Theta_V\Sigma\Theta_K^\top\Theta_Q x_t$, and the cosine with this target tends to $1$ almost surely whenever the target is nonzero.
\end{theorem}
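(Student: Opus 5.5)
The plan is to reduce to the rotationally invariant case via the linear change of variables $x_1 = BZ$, and then apply the equivariance argument from the proof of Theorem~\ref{thm:rotational} to $Z$. Write $p:=\Theta_K^\top q$ and $c:=\sqrt{d_k}$. Substituting $x_1=BZ$ into the definition of $m_x(q)$ in Theorem~\ref{thm:lln} gives
\[
m_x(q)=\frac{\E[BZ\,e^{(B^\top p)^\top Z/c}]}{\E[e^{(B^\top p)^\top Z/c}]}=B\,T_Z(B^\top p),
\]
where $T_Z(w):=\E[Z e^{w^\top Z/c}]/\E[e^{w^\top Z/c}]$ is the tilted mean of $Z$. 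The integrability hypotheses of Theorem~\ref{thm:lln} transfer to $Z$ because $B$ is invertible; alternatively, the assumed finiteness of $\E e^{u^\top Z}$ near $u=B^\top p/c$ gives a finite moment generating function and, by a standard argument, finiteness of $\E[\norm Z e^{u^\top Z}]$ at that point.

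Next I would invoke the direction and positivity parts of the proof of Theorem~\ref{thm:rotational}, applied to $Z$ with $\Theta_K=I$ and query $w=B^\top p$. These parts use only $O(d)$-invariance, $Z\not\equiv 0$, and integrability. They give $T_Z(w)=\gamma(\norm w)\,w$ with $\gamma(\norm w)>0$ whenever $w\neq 0$. Since $B$ is invertible, $w=B^\top p\neq 0$ exactly when $p\neq 0$.

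This is the one point needing care. If $p=0$, then the target $\Theta_V\Sigma\Theta_K^\top q$ is zero and the claim is vacuous; more precisely, $m_x(q)=\E x_1=0$ by symmetry, and the identity holds with any $\gamma$. I will therefore treat $p\neq 0$ as the substantive case and handle $p=0$ with this remark. Substituting back gives
\[
m_x(q)=\gamma(\norm{B^\top p})\,BB^\top p=\gamma_\Sigma(q)\,\Sigma\,\Theta_K^\top q .
\]
Moreover $\norm{B^\top p}^2=p^\top\Sigma p=q^\top\Theta_K\Sigma\Theta_K^\top q$, so $\gamma_\Sigma(q):=\gamma\bigl((q^\top\Theta_K\Sigma\Theta_K^\top q)^{1/2}\bigr)$ depends on $B$ only through this quadratic form. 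Here $\gamma$ is determined by the law of $Z$ alone. This resolves the apparent ambiguity that different factorizations $B$ with the same $BB^\top$ might give different scalars.

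Finally, by the remark after Theorem~\ref{thm:lln}, $y_t\to\Theta_V m_x(q_t)$ almost surely, which equals $\gamma_\Sigma(q_t)\Theta_V\Sigma\Theta_K^\top\Theta_Q x_t$. When this target is nonzero, the cosine converges to $1$ by continuity of $\cos$ away from the origin, since the positive scalar does not change the direction. One subtlety is that $q_t$ depends on $t$, whereas Theorem~\ref{thm:lln} fixes $q$. I would follow the paper's convention and treat $x_t$ as the fixed query, i.e.\ apply the law of large numbers at the query $q=\Theta_Q x_t$, exactly as in Theorem~\ref{thm:rotational}. I expect the main obstacle to be transferring the integrability conditions and verifying the moment condition under the tilt, rather than the algebra, which is routine.
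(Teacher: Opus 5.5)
Your proposal is correct and follows the paper's proof essentially step for step. You write $m_x(q)=B\,T_Z(B^\top\Theta_K^\top q)$, apply the rotational-invariance argument to $Z$ at the query $w=B^\top\Theta_K^\top q$, use $\|w\|^2=q^\top\Theta_K\Sigma\Theta_K^\top q$ to get the claimed dependence of $\gamma_\Sigma$, and then multiply by $\Theta_V$. Your extra care is correct and goes beyond what the paper writes: you handle the degenerate case $p=0$, transfer integrability through the invertibility of $B$, and flag the fixed-query convention for $q_t$; the paper passes over all three silently.
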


\begin{proof}
Set $p:=\Theta_K^\top q$ and $w:=B^\top p$. Since $p^\top x_1\stackrel{d}{=}p^\top BZ=w^\top Z$,
\[
m_x(q)=B\,\frac{\E[Z\,e^{w^\top Z/\sqrt{d_k}}]}
{\E[e^{w^\top Z/\sqrt{d_k}}]}.
\]
Theorem~\ref{thm:rotational} applied to the rotationally invariant $Z$ with ``query'' $w$ gives the inner ratio as $\tilde\gamma(\norm w)\,w$ with $\tilde\gamma(\norm w)>0$. Thus
\[
m_x(q)=\tilde\gamma(\norm w)\,BB^\top p
=\tilde\gamma(\norm w)\,\Sigma\,\Theta_K^\top q,\qquad
\gamma_\Sigma(q):=\tilde\gamma(\norm{B^\top\Theta_K^\top q})>0.
\]
Since $\norm{B^\top\Theta_K^\top q}^2=q^\top\Theta_K\Sigma\Theta_K^\top q$ depends on $B$ only through$\Sigma$, so does $\gamma_\Sigma(q)$. Left-multiplying by $\Theta_V$ and using $y_t=\Theta_V\bar X_t(q_t)\to\Theta_V m_x(q_t)$ concludes.
\end{proof}

Theorem~\ref{thm:elliptical} is the central structural result. In the long-context limit, the attention head is a linear function of $x_t$,
\[
y_t\;\approx\;\gamma_\Sigma(q_t)\,\underbrace{\Theta_V\Sigma\Theta_K^\top\Theta_Q}_{\text{covariance readout}}\,x_t,
\]
whose matrix couples the attention parameters with the input second-order statistics. The realized context $(x_1,\ldots,x_{t-1})$ enters only through the convergence: in the limit, the head forgets the identities of past tokens and retains only the population covariance $\Sigma$. Empirical results are give in Appendix~\ref{app:exp-sec323}.

\subsection{Finite-sample concentration under strong mixing}
\label{sec:rate}

We now quantify the rate at which the covariance readout is attained. The scaling exposes $\tr(\Sigma)$ as the effective cost of dimension and makes explicit the role of weak dependence through a single scalar.

\begin{assumption}[Elliptical sub-Gaussian inputs with strong mixing]
\label{ass:elliptical}
$(x_j)_{j\ge 1}$ is strictly stationary and ergodic on $\R^d$:
\begin{itemize}
\item[\textup{(a)}] (Elliptical sub-Gaussian marginal) $x_1\stackrel{d}{=}BZ$ with $B\in\R^{d\times d}$ deterministic and invertible, $\Sigma:=BB^\top\succ 0$, $Z$ rotationally invariant on $\R^d$ with $\E Z=0$, $\Cov(Z)=I_d$, and $\E e^{u^\top Z}\le e^{\sigma^2\norm u^2/2}$ for all $u\in\R^d$ and some $\sigma\ge 1$.
\item[\textup{(b)}] (Strong mixing) The Rosenblatt $\alpha$-mixing coefficients satisfy $\tau_\alpha:=1+2\sum_{k\ge 1}\sqrt{\alpha(k)}<\infty$.
\end{itemize}
\end{assumption}

Setting $\alpha\equiv 0$ recovers i.i.d.\ inputs with $\tau_\alpha=1$; condition~(b) is satisfied by geometrically ergodic Markov chains, hidden Markov models, and smooth causal functionals of such processes---the natural structural class for transformer hidden states.

\begin{theorem}[Concentration of the covariance readout]
\label{thm:rate}
Under Assumption~\ref{ass:elliptical}, for every $q\in\R^{d_k}$ with $q^\top\Theta_K\Sigma\Theta_K^\top q\le d_k$ and every $n\ge 1$,
\begin{equation}\label{eq:rate-L2}
\E\norm{\bar X_n(q)-m_x(q)}^{2}\le C_1\,\tau_\alpha\,\frac{\tr(\Sigma)}{n},
\end{equation}
for a constant $C_1$.
\end{theorem}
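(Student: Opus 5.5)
The plan is to write $\bar X_n(q)-m_x(q)$ as a ratio and linearize. Set $u:=p/\sqrt{d_k}$ with $p=\Theta_K^\top q$, $W_j:=e^{u^\top x_j}$, $\mu_W:=\E W_1$, and $D_j:=(x_j-m_x(q))W_j$, so that $\E D_1=0$ by definition of $m_x(q)$. Then
\[
\bar X_n(q)-m_x(q)=\frac{\frac1n\sum_{j\le n}D_j}{\frac1n\sum_{j\le n}W_j}.
\]
The hypothesis $q^\top\Theta_K\Sigma\Theta_K^\top q\le d_k$ says exactly that $\norm{B^\top u}\le 1$, i.e.\ the tilt seen by $Z$ has norm at most one. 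By Jensen, $\mu_W\ge e^{u^\top\E x_1}=1$. By the sub-Gaussian bound, $\E W_1^r\le e^{r^2\sigma^2/2}$ for any fixed $r$. So all moments of the weights are bounded by constants depending only on $\sigma$.

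The ratio is awkward because the denominator can be small. I would avoid dividing by a random quantity through a case split. Let $G:=\{\frac1n\sum W_j\ge \mu_W/2\}$. On $G$ the error is at most $(2/\mu_W)\norm{\frac1n\sum D_j}\le 2\norm{\frac1n\sum D_j}$. On $G^c$ I would use the crude bound that $\bar X_n(q)$ is a convex combination of the $x_j$, so $\norm{\bar X_n-m_x}\le\max_j\norm{x_j}+\norm{m_x}$. This term is then controlled by Cauchy--Schwarz, $\E[\cdot\,\mathbf 1_{G^c}]\le(\E(\cdot)^4)^{1/2}\Prob(G^c)^{1/2}$. A Chebyshev bound on $\Prob(G^c)$ alone gives only $O(\tau_\alpha/n)$ to the power $1/2$, which is too weak. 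I would therefore use a fourth-moment (Rosenthal-type) mixing inequality for $\frac1n\sum(W_j-\mu_W)$, giving $\Prob(G^c)\lesssim n^{-2}$. The $\max_j\norm{x_j}$ factor contributes at most a $\log n\cdot\tr(\Sigma)$-type growth, which is where I expect the constant bookkeeping to be delicate. An alternative I would try first is to bound the error by $\frac1n\sum\norm{x_j-m}W_j/(\frac1n\sum W_j)$ and apply Hölder with negative moments of the denominator. Since each $W_j>0$, the bound $(\frac1n\sum W_j)^{-1}\le\frac1n\sum W_j^{-1}$ (AM--HM) has bounded moments by the sub-Gaussian bound applied at $-u$. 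That route avoids the case split entirely but loses a clean $1/n$ unless it is combined with the linearization.

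The main step is the variance bound for the numerator,
\[
\E\Bigl\|\tfrac1n\textstyle\sum_{j\le n}D_j\Bigr\|^2\le \frac{C\tau_\alpha}{n}\,\E\norm{D_1}^{2}\ \text{(up to a higher-moment factor)}.
\]
To get it, I would expand the square coordinatewise and bound $|\Cov(D_i,D_j)|$ with the covariance inequality for $\alpha$-mixing sequences. Davydov's inequality with exponents $(4,4,2)$ gives $|\E\langle D_i,D_j\rangle|\le 8\sqrt{\alpha(|i-j|)}\,\norm{\norm{D_1}}_{L^4}^2$. Summing over lags then yields exactly the factor $\tau_\alpha$, which is where the $\sqrt{\alpha(k)}$ in Assumption~\ref{ass:elliptical}(b) comes from.

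It remains to show $\norm{\norm{D_1}}_{L^4}^2\le C(\sigma)\tr(\Sigma)$. Here the elliptical structure enters. Write $x_1-m_x=B(Z-\tilde m)$, where by Theorem~\ref{thm:elliptical} $\tilde m=\tilde\gamma(\norm w)w$ with $\norm w\le\sqrt{d_k}$ scaled so that the effective tilt $w/\sqrt{d_k}$ has norm at most $1$. Then $\E\norm{BZ}^4W^4$ is bounded via Cauchy--Schwarz by $(\E\norm{BZ}^8)^{1/2}(\E W^8)^{1/2}$. The first factor is at most $C\sigma^4\tr(\Sigma)^2$, because $\norm{BZ}^2$ is a sub-exponential quadratic form with mean $\tr(\Sigma)$ (Hanson--Wright-type moment bounds for sub-Gaussian rotationally invariant $Z$). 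We also need $\norm{B\tilde m}^2\le C\,\norm{B}^2\le C\tr(\Sigma)$, which follows from $\tilde m=\nabla\log\E e^{v^\top Z}$ at $\norm v\le1$ together with the sub-Gaussian mgf bound, giving $\norm{\tilde m}\le C(\sigma)$. I expect the main obstacle to be controlling the denominator and the bad event without losing the $1/n$ rate. If the Rosenthal route fails, I would fall back on the AM--HM/Hölder argument with moments of order $4$, accepting a larger constant $C_1=C_1(\sigma)$.
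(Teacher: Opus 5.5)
Your skeleton matches the paper's. You use the same ratio identity (your $\frac1n\sum_j D_j$ is the paper's $R_n$, which you bound directly instead of splitting it) and the Jensen bound $\mu_W\ge 1$. You also take the same good event $\{\frac1n\sum_j W_j\ge\mu_W/2\}$ and a $(4,4,2)$ covariance inequality summed over lags to produce $\tau_\alpha$. Coordinatewise sub-Gaussian moments give $\tr(\Sigma)$, and a fourth-moment mixing inequality gives $\Prob(G^c)\le C/n^2$.

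The step that does not close is the bad event. With Cauchy--Schwarz, $\Prob(G^c)^{1/2}\le C/n$ already spends the whole $1/n$ budget. You would therefore need $\E\bigl(\max_{j\le n}\norm{x_j}+\norm{m_x(q)}\bigr)^4\le C\,\tr(\Sigma)^2$ uniformly in $n$, and that is false. The union bound costs a factor $n$, giving a final rate $n^{-1/2}$. The sharp sub-Gaussian maximal inequality still gives $C(\sigma)\bigl(\tr(\Sigma)+\norm{\Sigma}_{\mathrm{op}}\log n\bigr)^2$, and this cannot be improved: for $d=1$ and i.i.d.\ standard Gaussian inputs, $\E\max_{j\le n}x_j^4\asymp(\log n)^2$. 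So your argument ends at $C\bigl(\tr(\Sigma)+\norm{\Sigma}_{\mathrm{op}}\log n\bigr)/n$. The $\log n$ is a loss in the rate, not constant bookkeeping, and it remains even if the Rosenthal step succeeds.

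The paper closes this by using H\"older exponents $(3,3/2)$ instead of $(2,2)$: $\E[\norm{\bar X_n-m}^2\mathbf 1_{A_n^c}]\le(\E\norm{\bar X_n-m}^6)^{1/3}\Prob(A_n^c)^{2/3}$. Here $\Prob(A_n^c)^{2/3}\le C n^{-4/3}$ leaves an $n^{1/3}$ margin. That margin absorbs the crude bound $\E\norm{\bar X_n}^6\le\E\max_{j\le n}\norm{x_j}^6\le n\,\E\norm{x_1}^6\le C(\sigma)\,n\,\tr(\Sigma)^3$. With your sub-Gaussian maximal bound, any exponent strictly above $2$ would suffice.

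Alternatively, you can keep Cauchy--Schwarz if you replace the max by the convex weights and use your own negative-moment idea. With $a_j:=W_j/\sum_i W_i$, convexity gives $\norm{\bar X_n}^4\le\sum_j a_j\norm{x_j}^4$, which is the ratio of $\frac1n\sum_j W_j\norm{x_j}^4$ to $\frac1n\sum_j W_j$. Cauchy--Schwarz, Minkowski on the numerator and AM--HM on the denominator then give $\E\norm{\bar X_n}^4\le C(\sigma)\tr(\Sigma)^2$ uniformly in $n$, and $\Prob(G^c)^{1/2}\le C/n$ suffices.

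In fact your AM--HM route needs no case split at all. Bound $\E\norm{\frac1n\sum_j D_j}^4\le C\,\tr(\Sigma)^2/n^2$ by applying the same fourth-moment mixing inequality coordinatewise to the numerator, then Minkowski in $L^2$. The price is that the explicit factor $\tau_\alpha$ becomes the fourth-moment mixing constant, which the paper's conclusion also absorbs into $C_1$.
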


The proof is given in Appendix~\ref{app:rate}.

Combining Theorem~\ref{thm:rate} with Theorem~\ref{thm:elliptical} and Lemma~\ref{lem:stability} yields a finite-sample alignment bound.

\begin{corollary}[Finite-sample alignment]
\label{cor:rate-alignment}
Under Assumption~\ref{ass:elliptical}, for any $(\Theta_Q,\Theta_K,\Theta_V)$ and $x_t$ with $q_t:=\Theta_Q x_t$ satisfying $q_t^\top\Theta_K\Sigma\Theta_K^\top q_t\le d_k$ and $\Theta_V\Sigma\Theta_K^\top q_t\ne 0$,
\[
\E\!\left[1-\cos\!\bigl(y_t,\,\Theta_V\Sigma\Theta_K^\top\Theta_Q x_t\bigr)\right]
\le
\frac{C\,\tau_\alpha\,\norm{\Theta_V}_{\mathrm{op}}^{2}\,\tr(\Sigma)}
{n\,\gamma_\Sigma(q_t)^{2}\,\norm{\Theta_V\Sigma\Theta_K^\top q_t}^{2}},
\]
where $C$ depends only on $\sigma$ and $\gamma_\Sigma(q_t)$ is the positive scalar of Theorem~\ref{thm:elliptical}.
\end{corollary}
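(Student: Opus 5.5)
The plan is to convert the $L^2$ concentration bound of Theorem~\ref{thm:rate} into a bound on $1-\cos$. The bridge will be Lemma~\ref{lem:stability}, supplemented by a crude bound on the event where that lemma does not apply. Throughout, condition on $x_t$, so that $q_t$ is fixed and the bound of Theorem~\ref{thm:rate} applies with $n=t$. This is legitimate if $x_t$ is treated as a fixed query; with the alternative reading we would simply take $q$ deterministic, as in the statement.

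Set $u:=\Theta_V m_x(q_t)=\gamma_\Sigma(q_t)\,\Theta_V\Sigma\Theta_K^\top q_t$, using Theorem~\ref{thm:elliptical}. This vector is nonzero by hypothesis, and it has the same direction as the target $\Theta_V\Sigma\Theta_K^\top\Theta_Q x_t$, because $\gamma_\Sigma>0$. Hence $\cos(y_t,\text{target})=\cos(y_t,u)$. Next write $y_t=u+e$ with $e:=\Theta_V(\bar X_n(q_t)-m_x(q_t))$. This gives $\norm e\le\norm{\Theta_V}_{\mathrm{op}}\norm{\bar X_n-m_x}$, and therefore, by Theorem~\ref{thm:rate},
\[
\E\norm e^2\le C_1\tau_\alpha\norm{\Theta_V}_{\mathrm{op}}^2\tr(\Sigma)/n .
\]

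The key pointwise inequality is the following. For any $e$, set $\eta:=\norm e/\norm u$. Then
\[
1-\cos(u+e,u)\le 2\eta^2 .
\]
\emph{Case $\eta<1$.} Lemma~\ref{lem:stability} gives $\cos\ge\sqrt{1-\eta^2}\ge 1-\eta^2$, so $1-\cos\le\eta^2$.
\emph{Case $\eta\ge 1$.} Trivially $1-\cos\le 2\le 2\eta^2$. If $u+e=0$ here, we interpret $\cos$ as, say, $0$; this event is covered by the same bound.
Taking expectations then yields
\[
\E[1-\cos]\le 2\,\E\norm e^2/\norm u^2\le \frac{2C_1\tau_\alpha\norm{\Theta_V}_{\mathrm{op}}^2\tr(\Sigma)}{n\,\gamma_\Sigma(q_t)^2\norm{\Theta_V\Sigma\Theta_K^\top q_t}^2}.
\]
This is the claim with $C=2C_1$.

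The main subtlety is the dependence of the constant. Theorem~\ref{thm:rate} states $C_1$ abstractly, and we must check that it depends only on $\sigma$; this is presumably true under the normalization $q^\top\Theta_K\Sigma\Theta_K^\top q\le d_k$, which bounds the tilt uniformly. A second subtlety is handling the degenerate event $y_t=0$, which the crude case above absorbs. If $x_t$ is random and correlated with the context, one must additionally justify applying Theorem~\ref{thm:rate} conditionally. I would state the corollary for a fixed query, i.e.\ for $x_t$ independent of $(x_j)_{j\le n}$ or deterministic, and note this explicitly.
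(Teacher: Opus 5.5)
Your proposal is correct and is the argument the paper intends. The paper's only justification is the one-line remark that the corollary follows by combining Theorem~\ref{thm:rate}, Theorem~\ref{thm:elliptical} and Lemma~\ref{lem:stability}, which is exactly your decomposition $y_t=u+e$ with $u=\gamma_\Sigma(q_t)\,\Theta_V\Sigma\Theta_K^\top q_t$ and $\norm{e}\le\norm{\Theta_V}_{\mathrm{op}}\norm{\bar X_n(q_t)-m_x(q_t)}$. Your pointwise bound $1-\cos\le 2\eta^2$, which adds the trivial case $\eta\ge 1$ that Lemma~\ref{lem:stability} does not cover, and your fixed-query reading of $q_t$ (forced anyway by the deterministic right-hand side) fill in details the paper leaves implicit; the claim that $C$ depends only on $\sigma$ is simply inherited from the constant $C_1$ of Theorem~\ref{thm:rate} as stated.
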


\section{In-context learning as a joint-covariance readout}
\label{sec:icl}

Theorem~\ref{thm:elliptical} established that, under stationary ergodic elliptical inputs, the attention output $y_t$ converges in direction to $\Theta_V\Sigma\Theta_K^\top\Theta_Q x_t$. We now specialise to the token structure used in in-context learning (ICL), in which each input bundles a covariate and a target. The specialisation yields two observations. First, the simplest choice of projections realises, in the ergodic limit, a single step of population gradient descent on the in-context square loss: one self-attention layer is one GD step. Second, stacking such heads with residual connections iterates this update and converges to the Bayes-optimal in-context predictor: $K$ self-attention layers are $K$ GD steps. In-context learning of linear regression is therefore a direct consequence of the joint-covariance readout of Theorem~\ref{thm:elliptical}.

\subsection{The in-context prompt structure}
\label{ssec:icl-setup}

Partition the input dimension as $d=d_u+d_w$ and write each token as
\[
x_j=\begin{pmatrix}u_j\\ w_j\end{pmatrix}\in\R^d,\qquad
u_j\in\R^{d_u},\;w_j\in\R^{d_w},
\]
where $u_j$ is a covariate and $w_j$ a (possibly masked) target. The selectors
\[
S_u:=[I_{d_u}\ \ 0]\in\R^{d_u\times d},\qquad
S_w:=[0\ \ I_{d_w}]\in\R^{d_w\times d}
\]
satisfy $S_u x=u$ and $S_w x=w$. Block the input covariance conformably,
\[
\Sigma=\begin{pmatrix}\Sigma_{uu}&\Sigma_{uw}\\
\Sigma_{wu}&\Sigma_{ww}\end{pmatrix},\qquad
\Sigma_{ab}:=S_a\Sigma S_b^\top,\;a,b\in\{u,w\}.
\]
An ICL prompt presents $t-1$ labelled examples followed by a query with masked label: $x_j=(u_j,w_j)^\top$ for $j<t$ and $x_t=(u_t,0)^\top$. Under the canonical linear-regression task $w_j=\beta^\top u_j+\varepsilon_j$, $\varepsilon_j\perp u_j$, the Bayes/OLS regressor on $u_t$ is
\[
B^\star u_t\;:=\;\Sigma_{wu}\Sigma_{uu}^{-1}u_t,
\]
the directional target of any learner that aspires to in-context linear regression.

\subsection{A single head is one step of gradient descent}
\label{ssec:one-step}

The simplest choice of projections for this prompt---key and query read the covariate slot, value reads the target slot---yields, directly from Theorem~\ref{thm:elliptical}, a recognisable one-step Bayes estimator.

\begin{corollary}[One-step Bayes readout]\label{cor:one-step-bayes}
Assume the hypotheses of Theorem~\ref{thm:elliptical} and choose
\[
\Theta_K=S_u,\qquad \Theta_Q=S_u,\qquad \Theta_V=S_w,\qquad
x_t=(u_t,0)^\top.
\]
Then, almost surely,
\[
y_t\;\longrightarrow\;\gamma_\Sigma(q_t)\,\Sigma_{wu}\,u_t,
\]
and $\cos(y_t,\Sigma_{wu}u_t)\xrightarrow{\textup{a.s.}}1$ whenever $\Sigma_{wu}u_t\ne 0$. If in addition $(u_1,w_1)$ are jointly Gaussian with $w_1=\beta^\top u_1+\varepsilon$, $\varepsilon\perp u_1$, then $\Sigma_{wu}=\beta^\top\Sigma_{uu}$.
\end{corollary}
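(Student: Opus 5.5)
The plan is to substitute the chosen projections directly into Theorem~\ref{thm:elliptical} and then compute the resulting matrix blockwise. With $\Theta_K=\Theta_Q=S_u$ we have $d_k=d_u$. The query is $q_t=\Theta_Q x_t=S_u(u_t,0)^\top=u_t$, and the tilting vector is $p_t=\Theta_K^\top q_t=S_u^\top u_t=(u_t,0)^\top$.

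The hypotheses of Theorem~\ref{thm:elliptical} are assumed verbatim; the only point to check is that they hold for this particular $q_t$. Theorem~\ref{thm:elliptical} then gives, almost surely,
\[
y_t\to\gamma_\Sigma(q_t)\,\Theta_V\Sigma\Theta_K^\top\Theta_Q x_t=\gamma_\Sigma(q_t)\,S_w\Sigma S_u^\top u_t=\gamma_\Sigma(q_t)\,\Sigma_{wu}u_t .
\]
The last equality is just the definition $\Sigma_{ab}=S_a\Sigma S_b^\top$. Positivity of $\gamma_\Sigma(q_t)$ also comes from that theorem, provided $p_t\neq 0$, i.e.\ $u_t\neq0$. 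If $u_t=0$, then $\Sigma_{wu}u_t=0$ and there is nothing to prove for the cosine claim.

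For the cosine statement, suppose $\Sigma_{wu}u_t\neq 0$. The almost-sure limit of $y_t$ is then a strictly positive multiple of a nonzero vector. Writing $y_t=\gamma\Sigma_{wu}u_t+e_t$ with $\norm{e_t}\to0$, Lemma~\ref{lem:stability} with $\eta_t=\norm{e_t}/(\gamma\norm{\Sigma_{wu}u_t})\to0$ gives $\cos(y_t,\Sigma_{wu}u_t)\ge\sqrt{1-\eta_t^2}\to1$. The positive scalar does not affect the cosine.

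The one subtlety is what "$x_t$" means in the limit. The query token $(u_t,0)$ is not itself drawn from the stationary process, so I would hold $u_t$ fixed and let the context length grow, exactly as in Theorem~\ref{thm:lln} with $q$ fixed. I expect this interpretation to be the main thing to state carefully, rather than a genuine obstacle: the masked query only enters through $q_t$, and only the context tokens $x_j$, $j<t$, are averaged.

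For the Gaussian addendum, I would compute $\Sigma_{wu}=\Cov(w_1,u_1)$. This is valid because $\E Z=0$ makes $\Sigma$ proportional to, or equal to, the covariance; in the Gaussian case $\Sigma$ is the covariance, since $Z$ is standard normal. Then
\[
\Cov(\beta^\top u_1+\varepsilon,u_1)=\beta^\top\Cov(u_1,u_1)+\Cov(\varepsilon,u_1)=\beta^\top\Sigma_{uu},
\]
using $\varepsilon\perp u_1$ and $\E\varepsilon=0$, which joint Gaussian centering gives. It follows that $\Sigma_{wu}u_t=\beta^\top\Sigma_{uu}u_t$. This is one population gradient step from zero on $\tfrac12\E\norm{w-Bu}^2$, since the negative gradient there is $\E[wu^\top]=\Sigma_{wu}$. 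That interpretation motivates the name but is not part of the claim to prove.
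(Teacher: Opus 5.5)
Your proposal is correct and follows the paper's proof. The paper substitutes $q_t=u_t$ into Theorem~\ref{thm:elliptical}, reads off $S_w\Sigma S_u^\top=\Sigma_{wu}$, and obtains the Gaussian identity from the linear model; your treatment of the cosine via Lemma~\ref{lem:stability}, the $u_t=0$ case, the fixed-query reading, and the explicit covariance computation only makes explicit what the paper leaves implicit. One small correction: in the Gaussian case $Z$ is $\mathcal{N}(0,cI)$ for some $c>0$, not necessarily standard, but $\Sigma_{wu}=\beta^\top\Sigma_{uu}$ is homogeneous in $\Sigma$, so proportionality of $\Sigma$ to the covariance already suffices.
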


\begin{proof}
Substitute $\Theta_Q x_t=S_u x_t=u_t$ into Theorem~\ref{thm:elliptical}:
\[
\Theta_V\Sigma\Theta_K^\top\Theta_Q x_t
=S_w\,\Sigma\,S_u^\top u_t=\Sigma_{wu}u_t.
\]
The final identity follows from the linear-model assumption.
\end{proof}

\begin{remark}[One layer $=$ one step of population gradient descent]
\label{rem:one-step-gd}
Let $L(B):=\tfrac12\E\|w-Bu\|^2$. Its gradient $\nabla L(B)=B\Sigma_{uu}-\Sigma_{wu}$ satisfies $\nabla L(0)=-\Sigma_{wu}$, so a single gradient-descent step from $B^{(0)}=0$ with unit learning rate produces $\hat B^{(1)}u_t=\Sigma_{wu}u_t$---exactly the directional readout of Corollary~\ref{cor:one-step-bayes}. A single softmax attention head therefore is a single step of population gradient descent for linear regression.
\end{remark}

\begin{figure}[t]
\centering
\begin{subfigure}[b]{0.245\textwidth}
  \includegraphics[width=\linewidth]{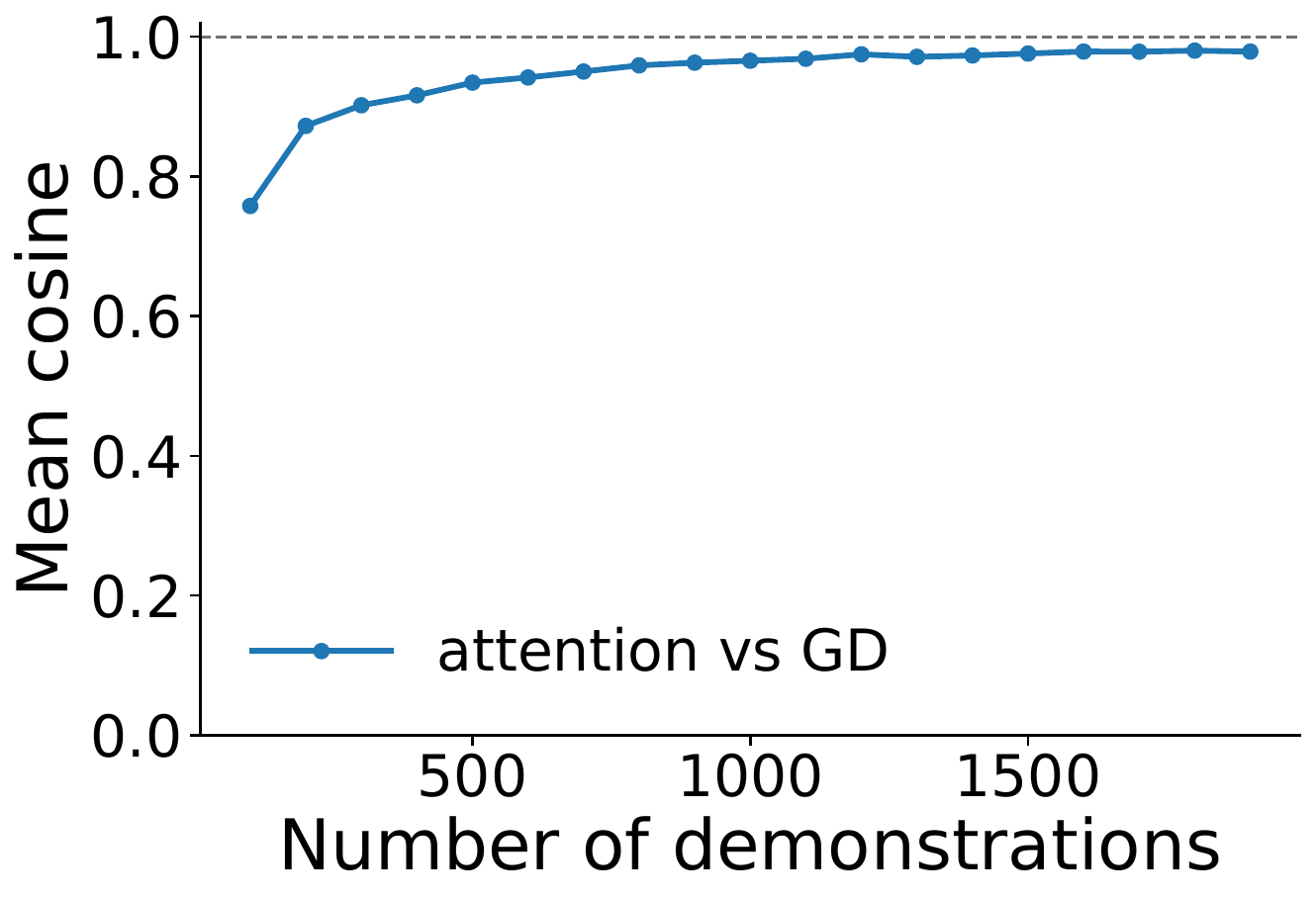}
  \caption{Single head: cosine}
  \label{fig:sec42-cos}
\end{subfigure}\hfill
\begin{subfigure}[b]{0.245\textwidth}
  \includegraphics[width=\linewidth]{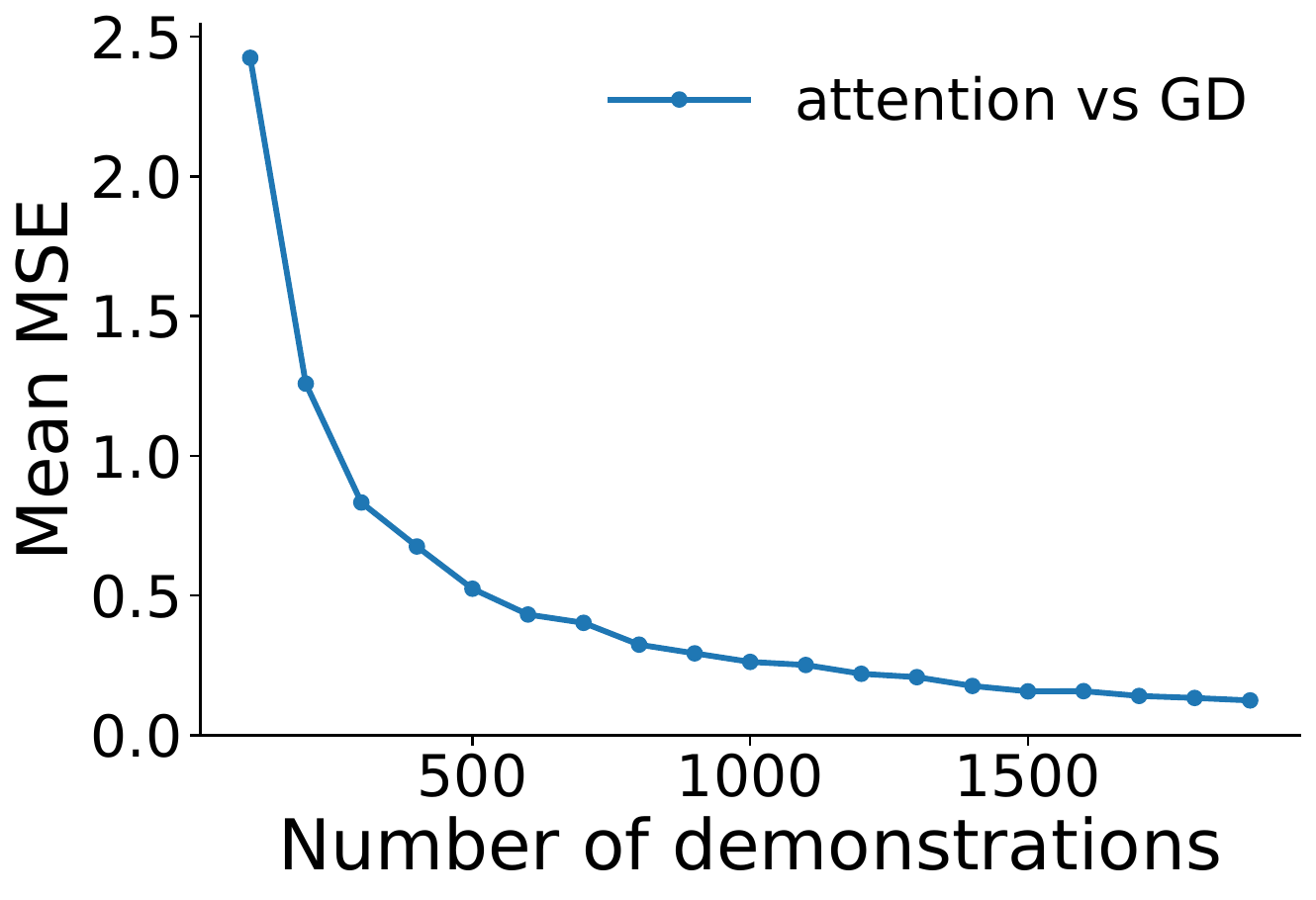}
  \caption{Single head: MSE}
  \label{fig:sec42-mse}
\end{subfigure}\hfill
\begin{subfigure}[b]{0.245\textwidth}
  \includegraphics[width=\linewidth]{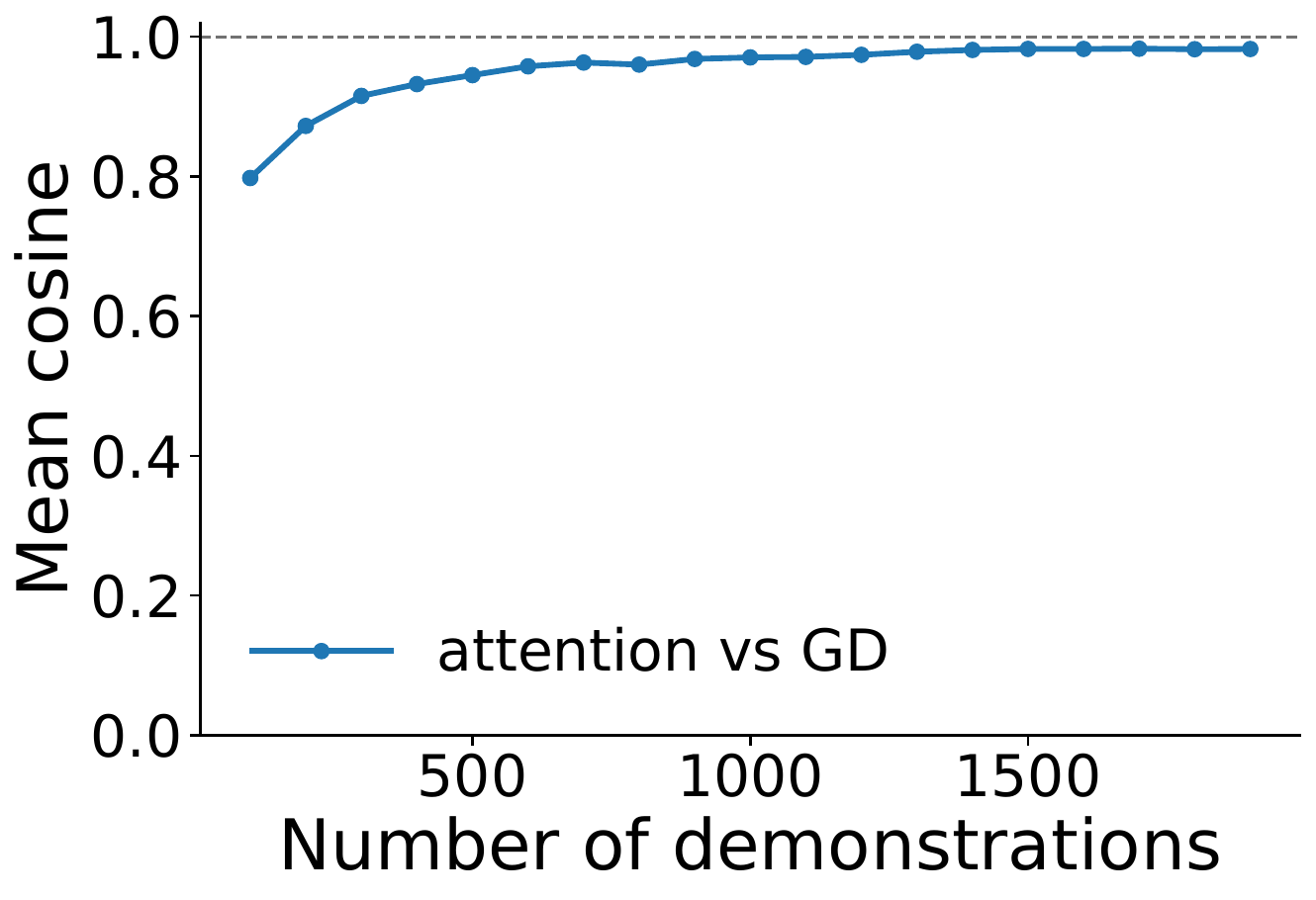}
  \caption{Stacked heads: cosine}
  \label{fig:sec43-cos}
\end{subfigure}\hfill
\begin{subfigure}[b]{0.245\textwidth}
  \includegraphics[width=\linewidth]{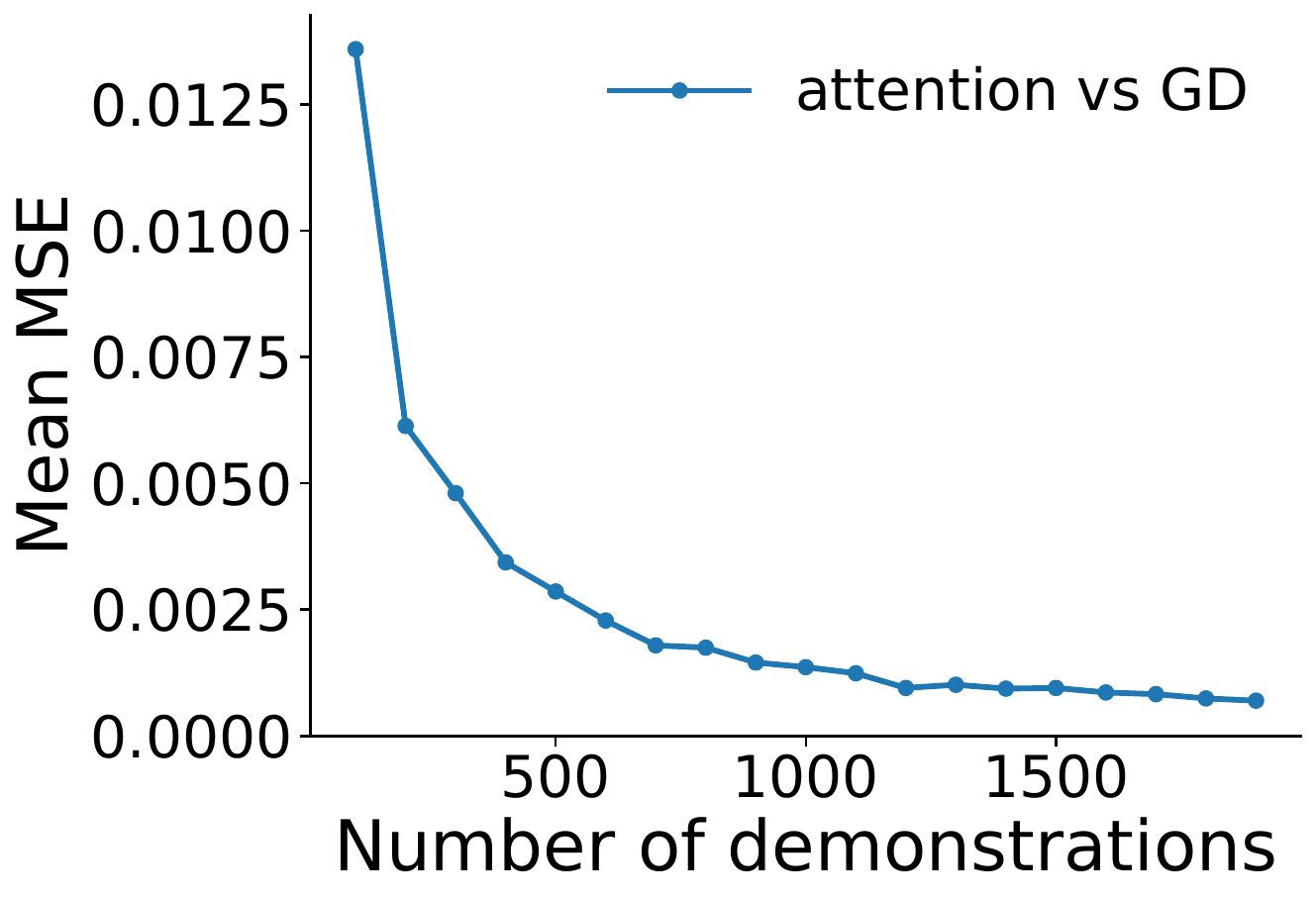}
  \caption{Stacked heads: MSE}
  \label{fig:sec43-mse}
\end{subfigure}
\caption{Convergence of the joint-covariance readout for in-context linear regression. Panels (a)--(b) (Corollary~\ref{cor:one-step-bayes}): a single softmax head with selectors $\Theta_Q=\Theta_K=S_u$, $\Theta_V=S_w$, evaluated against the population one-step gradient-descent target $\Sigma_{wu}u_t$. Panels (c)--(d) (Proposition~\ref{prop:depth-gd}): an eight-layer residual stack of such heads with weights~\eqref{eq:weights} and step size $\eta=10^{-2}$, evaluated against the population eight-step iterate $B^{(K)}u_t$. The horizontal axis is the number of in-context demonstrations $t-1$; panels (a) and (c) report the cosine similarity (dashed reference at $1$, the ergodic limit predicted by Theorem~\ref{thm:elliptical}), and panels (b) and (d) the mean squared error (dashed reference at $0$).See Appendix~\ref{app:exp-sec4} for the full configuration.}
\label{fig:icl-readout}
\end{figure}

\subsection{Depth is multi-step gradient descent}
\label{ssec:multi-layer}

The Bayes regressor $B^\star u_t=\Sigma_{wu}\Sigma_{uu}^{-1}u_t$ differs from the one-step readout $\Sigma_{wu}u_t$ of Corollary~\ref{cor:one-step-bayes} by the preconditioner $\Sigma_{uu}^{-1}$; depth closes this gap under the following uniform long-context idealisation.

\paragraph{Residual-stack construction.}
Hidden states $x_j^{(k)}=(u_j,\,r_j^{(k)})\in\R^d$ are initialised with $r_j^{(0)}=w_j$ for $j<t$ and $r_t^{(0)}=0$. Each of $K$ blocks is instantiated with
\begin{equation}\label{eq:weights}
\Theta_Q^{(k)}=\Theta_K^{(k)}=S_u,\qquad
\Theta_V^{(k)}=\eta\,S_w^{\top}S_w,\qquad
W_{\mathrm{MLP}}^{(k)}=-S_w^{\top}S_w\;\in\R^{d\times d},
\end{equation}
i.e.\ queries and keys read the covariate slot, the value head produces the $w$-slot vector $\eta\,S_w^{\top}\!\sum_i\alpha_{ji}^{(k)}r_i^{(k)}$ with $\alpha_{ji}^{(k)}=\mathrm{softmax}_i(u_j^{\top}u_i)$. Let $\xi_j^{(k)}\in\R^d$ denote the layer-$k$ block output at position $j$ in the stack. The block output and update are therefore
\begin{equation}\label{eq:block}
\xi_j^{(k)}=-\eta\,S_w^{\top}\!\sum_{i}\alpha_{ji}^{(k)}\,r_i^{(k)},\qquad
x_j^{(k+1)}=x_j^{(k)}+\xi_j^{(k)},
\end{equation}
and $u_j^{(k+1)}=u_j$.

\paragraph{Induction across layers.}
Assume inductively $r_j^{(k)} = w_j - B^{(k)} u_j$ for every $j \ge 1$ (with the convention $w_t := 0$), whence the cross-covariance between the residual slot and the covariate satisfies
\[
\operatorname{Cov}(r^{(k)}, u) \;=\; \Sigma_{wu} - B^{(k)} \Sigma_{uu} \;=\; -\nabla L(B^{(k)}).
\]
with $L(B):=\tfrac12\E\|w-Bu\|^2$, $\nabla L(B)=B\Sigma_{uu}-\Sigma_{wu}$, and $B^{(k+1)}=B^{(k)}-\eta\,\nabla L(B^{(k)})$, $B^{(0)}=0$. Applying Theorem~\ref{thm:elliptical} to the layer-$k$ hidden-state process with query $q_j^{(k)} = \Theta_Q^{(k)} x_j^{(k)} = u_j$, the softmax-weighted average in~\eqref{eq:block} converges, in the long-context limit, to
\[
S_w\,\xi_j^{(k)} \;=\; -\eta \sum_{i} \alpha_{ji}^{(k)}\, r_i^{(k)} \;\longrightarrow\; -\eta\,\gamma_{\Sigma}^{(k)}\,\operatorname{Cov}(r^{(k)}, u)\,u_j \;=\; \eta_k\,\nabla L\!\bigl(B^{(k)}\bigr)\,u_j,
\]
where $\gamma_{\Sigma}^{(k)} > 0$ is the positive directional scalar supplied by Theorem~\ref{thm:elliptical} at layer $k$, and we have introduced the effective layer-$k$ step size
\[
\eta_k \;:=\; \eta\,\gamma_{\Sigma}^{(k)} \;>\; 0.
\]
Thus $\eta_k$ is the nominal step size $\eta$ of~\eqref{eq:weights} rescaled by the Theorem~\ref{thm:elliptical} scalar $\gamma_\Sigma^{(k)}$.

\begin{assumption}[Uniform long-context approximation]
\label{ass:uniform-long-context}
Let $\xi_j^{(k)}\in\R^d$ denote the layer-$k$ block output at position $j$ in the stack. For every $k\ge 0$ and every $j\ge 1$,
\begin{equation}\label{eq:uniform-approx}
S_w\,\xi_j^{(k)}\;\approx\;\eta_k\,\nabla L\!\bigl(B^{(k)}\bigr)\,u_j,
\end{equation}
\end{assumption}

Corollary~\ref{cor:rate-alignment} establishes~\eqref{eq:uniform-approx} at position $j$ with $L^2$-error $O\!\bigl((\tau_\alpha\operatorname{tr}\Sigma/j)^{1/2}\bigr)$; Assumption~\ref{ass:uniform-long-context} extends it uniformly in $j$.

Assumption~\ref{ass:uniform-long-context} asserts the above approximation uniformly in $j$, so that $\eta_k$ may be treated as a single scalar per layer (in the Gaussian special case it reduces to $\eta_k = \eta/\sqrt{d_u}$ exactly, constant across layers). Substituting into the block update~\eqref{eq:block} gives
\[
r_j^{(k+1)} \;=\; r_j^{(k)} + S_w\xi_j^{(k)} \;\approx\; \bigl(w_j - B^{(k)} u_j\bigr) + \eta_k\,\nabla L\!\bigl(B^{(k)}\bigr)\,u_j \;=\; w_j - B^{(k+1)} u_j,
\]
with $B^{(k+1)} := B^{(k)} - \eta_k\,\nabla L(B^{(k)})$. This is the inductive hypothesis at layer $k+1$; the base case $k=0$ follows from the initialisation.

\begin{proposition}[$K$ layers $\approx$ $K$ gradient-descent steps]
\label{prop:depth-gd}
Under Theorem~\ref{thm:elliptical} and Assumption~\ref{ass:uniform-long-context}, with $0<\eta_k<2/\lambda_{\max}(\Sigma_{uu})$, the query prediction $\hat w_t^{(K)}:=-r_t^{(K)}$ satisfies
\[
\hat w_t^{(K)}\;\xrightarrow[t\to\infty]{\mathrm{a.s.}}\;B^{(K)}u_t
\]
\end{proposition}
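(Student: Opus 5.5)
I will prove the statement by induction on the layer index $k$, establishing that $r_t^{(k)}\to -B^{(k)}u_t$ almost surely as $t\to\infty$ for each fixed $k\le K$. The step-size condition $0<\eta_k<2/\lambda_{\max}(\Sigma_{uu})$ is not needed for the limit itself. It only guarantees that the deterministic recursion $B^{(k+1)}=B^{(k)}-\eta_k\nabla L(B^{(k)})$ is a contraction toward $B^\star$, so I will note it and set it aside.

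\emph{Base case.} The base case $k=0$ is immediate, since $r_t^{(0)}=0=-B^{(0)}u_t$ with $B^{(0)}=0$.

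\emph{Inductive step.} Assume that at layer $k$ the residual slots satisfy $r_j^{(k)}=w_j-B^{(k)}u_j$ for $j<t$, and that $r_t^{(k)}\to -B^{(k)}u_t$. The layer-$k$ hidden-state process $x_j^{(k)}=(u_j,\,w_j-B^{(k)}u_j)$ is the image of $x_j$ under the fixed invertible block matrix
\[
M_k=\begin{pmatrix}I&0\\-B^{(k)}&I\end{pmatrix}.
\]
It therefore inherits stationarity and ergodicity, as a measurable function of a stationary ergodic process, and it is elliptical with $B$ replaced by $M_kB$. Its covariance is $M_k\Sigma M_k^\top$, whose $(w,u)$ block is $\Sigma_{wu}-B^{(k)}\Sigma_{uu}=-\nabla L(B^{(k)})$. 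Theorem~\ref{thm:elliptical} applies with $\Theta_Q=\Theta_K=S_u$ and query $u_t$. Applying $S_w$ to the tilted mean then gives
\[
S_w\xi_t^{(k)}\to \eta\gamma_\Sigma^{(k)}\nabla L(B^{(k)})u_t=\eta_k\nabla L(B^{(k)})u_t
\]
almost surely. This is exactly the identity that Assumption~\ref{ass:uniform-long-context} asserts, now holding as a genuine almost-sure limit at the query position. Adding it to the inductive limit for $r_t^{(k)}$ yields
\[
r_t^{(k+1)}\to -B^{(k)}u_t+\eta_k\nabla L(B^{(k)})u_t=-B^{(k+1)}u_t.
\]
After $K$ layers, $\hat w_t^{(K)}=-r_t^{(K)}\to B^{(K)}u_t$.

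\emph{Main obstacle.} The difficulty is that at layer $k\ge1$ the context tokens $r_j^{(k)}$ for $j<t$ are produced by earlier attention layers. They equal $w_j-B^{(k)}u_j$ only approximately, with errors that vanish as $j\to\infty$ but not at small $j$. So the layer-$k$ input is not exactly the stationary process $M_kx_j$, and Theorem~\ref{thm:elliptical} cannot be applied verbatim. I would handle this in two parts:
\begin{itemize}
\item Invoke Assumption~\ref{ass:uniform-long-context}, which asserts the approximation uniformly in $j$, so that the layer-$k$ context coincides with the idealised process.
\item More robustly, argue by Cesàro averaging that perturbations $\delta_j\to0$ of the values do not affect the limit of the tilted mean. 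The keys $u_j$ are unperturbed, so the softmax weights are unchanged. The perturbation's contribution is $\sum_j a_{tj}\delta_j$, which I would bound by $\max_j\|\delta_j\|$ times the weight mass on the first $J$ tokens plus $\sup_{j>J}\|\delta_j\|$. The first mass vanishes by Theorem~\ref{thm:lln} applied to the denominator.
\end{itemize}
This reduces the perturbed case to the idealised one and closes the induction.
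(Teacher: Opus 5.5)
Your proposal is correct and is essentially the paper's own argument. The paper likewise inducts over layers with hypothesis $r_j^{(k)}=w_j-B^{(k)}u_j$. It applies Theorem~\ref{thm:elliptical} to the layer-$k$ process (your $M_kx_j$, whose $(w,u)$ covariance block is $\Sigma_{wu}-B^{(k)}\Sigma_{uu}=-\nabla L(B^{(k)})$) and uses Assumption~\ref{ass:uniform-long-context} to carry the identity across the context positions. You are also right that the bound on $\eta_k$ matters only for the approach of $B^{(K)}$ to $B^\star$, not for the stated limit.

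Your explicit linear-image representation and the Ces\`aro treatment of early-position errors sharpen the paper's sketch. Note, however, that the Ces\`aro route only weakens the assumption rather than removing it. Showing $\delta_j\to 0$ would need convergence along the moving queries $u_j$ (Theorem~\ref{thm:lln} is for fixed $q$) with a $j$-independent scalar. Outside the Gaussian case, $\gamma_\Sigma(u_j)$ genuinely depends on $u_j^\top\Sigma_{uu}u_j$, so the un-idealised layer-$(k+1)$ context $w_j-\bigl(B^{(k)}-\eta\gamma_\Sigma(u_j)\nabla L(B^{(k)})\bigr)u_j$ is not a linear image of $x_j$.
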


We empirically verify the two structural claims of Sections~\ref{ssec:one-step}--\ref{ssec:multi-layer}: that a single softmax head with the selectors of Corollary~\ref{cor:one-step-bayes} realises one step of population gradient descent on the in-context square loss, and that the residual stack of Section~\ref{ssec:multi-layer} iterates that update across depth. For each prompt we sample a fresh task $(\Sigma_{uu},\beta)$, generate $t-1$ Gaussian covariates with noiseless targets $w_j=\beta^{\top}u_j$, and feed the resulting prompt through the parameter-frozen heads of~\eqref{eq:weights}; we then compare the head output against its theoretical population target---$\Sigma_{wu}u_t$ for the single layer and the $K$-step iterate $B^{(K)}u_t$ for the stack---using both cosine similarity and mean squared error, averaged over hundreds of independently sampled tasks per prompt-length value. Figure~\ref{fig:icl-readout} summarises the outcome: cosine similarities approach unity and squared errors decay toward zero as the prompt grows. The match is essentially exact already at a few hundred demonstrations, supporting the interpretation of stacked attention as gradient descent on the in-context loss.

\section{Multi-layer collapse and Markovian repetition}
\label{sec:collapse}

We now propagate the single-head covariance readout through depth and through autoregressive decoding. The results say that the terminal hidden state of an $L$-layer transformer becomes, in the long-context limit, a deterministic position-wise map of the current token alone, and that autoregressive generation therefore collapses asymptotically into a first-order Markov chain.

\paragraph{Multi-layer setup.}
Fix depth $L\ge 1$ with single-head causal attention at each layer and parameters $\{(\Theta_Q^{(\ell)},\Theta_K^{(\ell)},\Theta_V^{(\ell)})\}_{\ell=1}^{L}$; take $d_v=d$ for notational clarity.\footnote{The extension to $H$-head attention is immediate: each head admits its own context-free readout and the output projection applied to the concatenation yields a composite readout to which the induction of Theorem~\ref{thm:multilayer} applies verbatim.} Setting $h_j^{(0)}:=x_j$, the hidden-state recursion is
\[
q_j^{(\ell)}=\Theta_Q^{(\ell)}h_j^{(\ell-1)},\quad
y_j^{(\ell)}=\Theta_V^{(\ell)}\bar X_j^{(\ell)}(q_j^{(\ell)}),\quad
h_j^{(\ell)}=\Phi^{(\ell)}\!\bigl(h_j^{(\ell-1)},y_j^{(\ell)}\bigr),
\]
where $\bar X_j^{(\ell)}$ is the softmax barycenter~\eqref{eq:barycenter} applied to $(h_i^{(\ell-1)})_{i\le j}$ and $\Phi^{(\ell)}:\R^d\times\R^d\to\R^d$ is a globally $L_\Phi$-Lipschitz residual block (covering pre-norm updates and Lipschitz MLPs).

\begin{assumption}[Equilibrium regime]
\label{ass:multilayer}
$(x_j)_{j\in\mathbb Z}$ satisfies Assumption~\ref{ass:elliptical}. For each $\ell\in\{0,\dots,L-1\}$, the hidden-state process $(h_j^{(\ell)})_{j\in\mathbb Z}$ satisfies Assumption~\ref{ass:elliptical} with covariance $\Sigma_\ell\succ 0$ and mixing constant $\tau_\alpha^{(\ell)}<\infty$.
\end{assumption}

\paragraph{Context-free layer maps.}
Define $F^{(0)}:=\mathrm{id}$ and recursively, for $\ell=1,\ldots,L$,
\begin{equation}\label{eq:F-layer}
F^{(\ell)}(x):=\Phi^{(\ell)}\!\bigl(F^{(\ell-1)}(x),\,r^{(\ell)}\!\circ F^{(\ell-1)}(x)\bigr),\quad
r^{(\ell)}(h):=\gamma_{\Sigma_{\ell-1}}\!(\Theta_Q^{(\ell)}h)\,\Theta_V^{(\ell)}\Sigma_{\ell-1}(\Theta_K^{(\ell)})^\top\Theta_Q^{(\ell)}h.
\end{equation}
Each $F^{(\ell)}$ is a deterministic, position-wise function of its argument: it depends on process history only through the population covariances $\Sigma_0,\ldots,\Sigma_{\ell-1}$, which are shift-invariant.

\begin{theorem}[Multi-layer context forgetting]
\label{thm:multilayer}
Under Assumption~\ref{ass:multilayer} and $(q_t^{(\ell)})^\top\Theta_K^{(\ell)}\Sigma_{\ell-1}(\Theta_K^{(\ell)})^\top q_t^{(\ell)}\le d_k$ for each $\ell$,
\[
h_t^{(L)}-F^{(L)}(x_t)\xrightarrow{\textup{a.s.}}0\qquad\text{as}\qquad t\to\infty,
\]
and for every $t\ge 1$,
\begin{equation}\label{eq:multilayer-rate}
\E\norm{h_t^{(L)}-F^{(L)}(x_t)}^{2}\le\frac{C_L\,T_L}{t},\qquad
T_L:=\sum_{\ell=0}^{L-1}\tau_\alpha^{(\ell)}\tr(\Sigma_\ell),
\end{equation}
with $C_L$ depending only on $\sigma$, $L_\Phi$, and the operator norms $\{\norm{\Theta^{(\ell)}}_{\mathrm{op}}\}_{\ell\le L}$.
\end{theorem}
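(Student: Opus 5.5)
We argue by induction on the depth $\ell$, tracking the error $e_t^{(\ell)}:=h_t^{(\ell)}-F^{(\ell)}(x_t)$ in $L^2$ and almost surely at once. The base case $\ell=0$ is trivial, since $h_t^{(0)}=x_t=F^{(0)}(x_t)$. For the inductive step we split
\[
y_t^{(\ell)}-r^{(\ell)}\!\bigl(F^{(\ell-1)}(x_t)\bigr)
=\underbrace{\Theta_V^{(\ell)}\bigl(\bar X_t^{(\ell)}(q_t^{(\ell)})-m^{(\ell)}(q_t^{(\ell)})\bigr)}_{\text{sampling term}}
+\underbrace{r^{(\ell)}(h_t^{(\ell-1)})-r^{(\ell)}\!\bigl(F^{(\ell-1)}(x_t)\bigr)}_{\text{propagation term}},
\]
where $m^{(\ell)}(q)=\gamma_{\Sigma_{\ell-1}}(q)\Sigma_{\ell-1}(\Theta_K^{(\ell)})^\top q$ is the tilted population mean of the layer-$(\ell-1)$ process. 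Theorem~\ref{thm:elliptical}, applied to that process (which is elliptical by Assumption~\ref{ass:multilayer}), identifies $\Theta_V^{(\ell)}m^{(\ell)}(q)=r^{(\ell)}(h)$ when $q=\Theta_Q^{(\ell)}h$. Since $\Phi^{(\ell)}$ is $L_\Phi$-Lipschitz, we obtain
\[
\norm{e_t^{(\ell)}}\le L_\Phi\bigl(\norm{e_t^{(\ell-1)}}+\norm{\text{sampling}}+\norm{\text{propagation}}\bigr).
\]

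The sampling term is handled by Theorem~\ref{thm:rate} for the process $(h_j^{(\ell-1)})$, which gives
\[
\E\norm{\cdot}^2\le C_1\norm{\Theta_V^{(\ell)}}_{\mathrm{op}}^2\,\tau_\alpha^{(\ell-1)}\tr(\Sigma_{\ell-1})/t .
\]
One subtlety is that the query $q_t^{(\ell)}$ is random and depends on the same sample. We deal with it by making the bound of Theorem~\ref{thm:rate} uniform over the admissible set $\{q:q^\top\Theta_K\Sigma\Theta_K^\top q\le d_k\}$; its constant depends only on $\sigma$ there. Concretely, we take a supremum over a net of that compact set and use Lipschitz dependence of $\bar X_t(q)$ and $m(q)$ on $q$ there, with moments controlled by sub-Gaussianity. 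Almost-sure convergence of this term follows from the ergodic LLN (Theorem~\ref{thm:lln}) made uniform on the compact query set via the same equicontinuity, and $q_t^{(\ell)}$ stays in that set by hypothesis.

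The propagation term requires $r^{(\ell)}$ to be Lipschitz on the relevant region. The map $h\mapsto\Theta_V\Sigma\Theta_K^\top\Theta_Q h$ is linear. For the scalar $\gamma_\Sigma(q)=\tilde\gamma(\norm{B^\top\Theta_K^\top q})$, we note that $\tilde\gamma(s)s$ is a derivative of the log-MGF of a one-dimensional projection of $Z$. Its derivative is a tilted variance, which is bounded by a constant depending on $\sigma$ on the compact range $s\le\sqrt{d_k}$. Hence $r^{(\ell)}$ is Lipschitz with constant $C(\sigma)\norm{\Theta_V^{(\ell)}}\norm{\Sigma_{\ell-1}}\norm{\Theta_K^{(\ell)}}\norm{\Theta_Q^{(\ell)}}$.

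This step is the main obstacle. Controlling $\norm{\Sigma_{\ell-1}}$ by operator norms alone is delicate. One option is to absorb $\Sigma$ through the constraint $\norm{B^\top\Theta_K^\top q}\le\sqrt{d_k}$. The other is to accept the dependence of $C_L$ on $\Sigma$ implicitly through $\sigma$-scaled quantities; the statement's dependence list suggests the former. Squaring and using $(a+b+c)^2\le 3(a^2+b^2+c^2)$ yields
\[
\E\norm{e_t^{(\ell)}}^2\le C\bigl(\E\norm{e_t^{(\ell-1)}}^2+\tau_\alpha^{(\ell-1)}\tr(\Sigma_{\ell-1})/t\bigr).
\]
Unrolling over $\ell=1,\dots,L$ gives $C_L T_L/t$ with $C_L$ growing geometrically in $L$. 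The almost-sure statement follows from the same recursion, read pathwise, since each summand tends to $0$ a.s.
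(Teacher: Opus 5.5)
Your skeleton matches the paper's. You induct on $\ell$, split $y_t^{(\ell)}-r^{(\ell)}(F^{(\ell-1)}(x_t))$ into a sampling term and a propagation term, use Lipschitzness of $\Phi^{(\ell)}$, and unroll. The step that fails is your treatment of the random query.

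\textbf{The uniform bound does not exist.} You propose to make Theorem~\ref{thm:rate} uniform over the admissible set with a constant depending only on $\sigma$. Consider the following example.
Take i.i.d.\ $x_j=BZ_j$ with $Z_j\sim\mathcal N(0,I_d)$, $\Theta_K=I_d$, and $B=\mathrm{diag}(1,\epsilon,\dots,\epsilon)$ with $\epsilon\le d^{-1/2}$, so $\tr(\Sigma)\le 2$. The admissible tilts $u=B^\top q/\sqrt d$ fill the unit ball, and $m(q)=Bu$. For $u=sv$ with small fixed $s$ and unit $v$, the first coordinate of $\bar X_t(q)-m(q)$ is, to first order, $\bar Z_{t,1}+s\,v^\top(\hat C_t-I)e_1$, where $\bar Z_t=t^{-1}\sum_j Z_j$ and $\hat C_t=t^{-1}\sum_j Z_jZ_j^\top$. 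Maximising over $v$ gives a term of order $s\sqrt{d/t}$. Hence $\E\sup_q\|\bar X_t(q)-m(q)\|^2$ is at least of order $d/t$, not $O(\tr(\Sigma)/t)$.

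\textbf{The net argument fails even earlier.} Theorem~\ref{thm:rate} gives only second moments, so a maximum over $N$ net points costs a factor $N$. The $q$-Lipschitz constant of $\bar X_t$ is a tilted empirical covariance of size about $\|\Sigma\|\|\Theta_K\|/\sqrt{d_k}$; it does not vanish with $t$. So the mesh must be $O(t^{-1/2})$, which makes $N$ of order $t^{\operatorname{rank}(\Theta_K)/2}$, and the bound stops decaying.

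\textbf{What the paper does, and what is needed.} The paper never takes a supremum. It conditions on $q_t^{(\ell)}$ and applies Theorem~\ref{thm:rate} at that fixed query. As written, that is justified only when the query is independent of the key/value sample. The missing ingredient is a decoupling argument, not uniformity:
\begin{itemize}
\item If the layer-$(\ell-1)$ tokens are i.i.d., $q_t^{(\ell)}$ is independent of $h_{<t}^{(\ell-1)}$. Only the $j=t$ summand, whose weight is typically $O(1/t)$, needs separate handling.
\item Under mixing, you must split off a window of positions near $t$ and control the residual dependence through the mixing coefficients.
\end{itemize}

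\textbf{The propagation term.} Your concern here is justified, but your first option (absorbing $\Sigma$ through the query constraint) does not work. Since $\Cov(Z)=I$, the Jacobian of $q\mapsto m(q)$ at $q=0$ is $\Sigma_{\ell-1}(\Theta_K^{(\ell)})^\top/\sqrt{d_k}$. So the Lipschitz constant of $r^{(\ell)}$ genuinely carries $\|\Sigma_{\ell-1}\|\,\|\Theta_K^{(\ell)}\|$. A norm constraint on the query at one point cannot remove this. The paper's $L_r^{(\ell)}=C(\sigma)\|\Theta_V^{(\ell)}\|\|\Theta_Q^{(\ell)}\|$ simply drops that factor. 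The constant this argument actually produces therefore also depends on $\max_\ell\|\Sigma_\ell\|$.

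\textbf{The almost-sure part.} Your argument here is sound, and it supplies something the paper's proof omits: an $O(1/t)$ bound in $L^2$ does not by itself give a.s.\ convergence. You apply the uniform ergodic theorem over the compact set of tilts $p=\Theta_K^\top q$, with integrable envelope $\|x_1\|e^{\|Z\|}$ where $x_1=BZ$. You then read the recursion pathwise, using that $r^{(\ell)}$ is Lipschitz on a neighbourhood of the admissible cone.
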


The proof is given in Appendix~\ref{app:multilayer}.

\paragraph{Autoregressive decoding.}
Decoding from a depth-$L$ model samples the next token according to a Lipschitz rule applied to the terminal hidden state,
\begin{equation}\label{eq:ar-multi}
x_{t+1}=\Psi\!\bigl(h_t^{(L)},\,x_t,\,\xi_{t+1}\bigr),\qquad (\xi_t)_{t\ge 1}\ \text{i.i.d.},\ \xi_{t+1}\perp\mathcal H_t^{(L)},
\end{equation}
with $\Psi$ $L_\Psi$-Lipschitz in its first argument uniformly in the others. Equation~\eqref{eq:ar-multi} covers the composition of an LM head with a softmax/greedy/temperature/top-$k$/nucleus sampler and a token re-embedding map.

\begin{proposition}[Markov closure]
\label{prop:markov}
Under Assumption~\ref{ass:multilayer} and dynamics~\eqref{eq:ar-multi},
\[
\norm{x_{t+1}-\Psi\!\bigl(F^{(L)}(x_t),x_t,\xi_{t+1}\bigr)}\xrightarrow{\textup{a.s.}}0\qquad\text{as}\qquad t\to\infty.
\]
Consequently, for every bounded Lipschitz $\varphi:\R^d\to\R$,
\[
\E[\varphi(x_{t+1})\mid\mathcal H_t]-K\varphi(x_t)\xrightarrow{\textup{a.s.}}0,\qquad
K\varphi(x):=\E_\xi\!\bigl[\varphi(\Psi(F^{(L)}(x),x,\xi))\bigr],
\]
so the generated process is asymptotically a time-homogeneous first-order Markov chain with transition kernel $K=K[\{\Theta^{(\ell)}\},\{\Sigma_\ell\},\Psi]$.
\end{proposition}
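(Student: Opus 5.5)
The plan is to reduce the first claim to Theorem~\ref{thm:multilayer} via the Lipschitz property of $\Psi$, and then transfer the pathwise statement to conditional expectations by dominated convergence.

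\emph{Step 1: pathwise closure.} By~\eqref{eq:ar-multi}, $x_{t+1}=\Psi(h_t^{(L)},x_t,\xi_{t+1})$. The comparison point is $\Psi(F^{(L)}(x_t),x_t,\xi_{t+1})$, which uses the same $x_t$ and $\xi_{t+1}$. Since $\Psi$ is $L_\Psi$-Lipschitz in its first argument uniformly in the others,
\[
\norm{x_{t+1}-\Psi\bigl(F^{(L)}(x_t),x_t,\xi_{t+1}\bigr)}\le L_\Psi\,\norm{h_t^{(L)}-F^{(L)}(x_t)}.
\]
Theorem~\ref{thm:multilayer} gives $h_t^{(L)}-F^{(L)}(x_t)\to 0$ almost surely, so the right-hand side tends to $0$ almost surely.

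Two caveats apply. First, Theorem~\ref{thm:multilayer} needs the query constraint $(q_t^{(\ell)})^\top\Theta_K^{(\ell)}\Sigma_{\ell-1}(\Theta_K^{(\ell)})^\top q_t^{(\ell)}\le d_k$. I read it as implicit in Assumption~\ref{ass:multilayer}, or else it is obtained by rescaling the queries. Second, Theorem~\ref{thm:multilayer} is stated for a process satisfying the equilibrium assumption. Here the process is the generated one, so I invoke Assumption~\ref{ass:multilayer} for it directly, i.e.\ the generated stream is assumed to be in the stationary mixing regime. Under that reading the a.s.\ statement applies verbatim.

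\emph{Step 2: conditional expectations.} Fix a bounded Lipschitz $\varphi$ with Lipschitz constant $L_\varphi$ and sup norm $\norm\varphi_\infty$. Because $h_t^{(L)}$ and $x_t$ are $\mathcal H_t$-measurable and $\xi_{t+1}$ is independent of $\mathcal H_t$, the freezing lemma gives
\[
\E[\varphi(x_{t+1})\mid\mathcal H_t]=G(h_t^{(L)},x_t),\qquad G(h,x):=\E_\xi\,\varphi(\Psi(h,x,\xi)).
\]
By definition $K\varphi(x_t)=G(F^{(L)}(x_t),x_t)$. The difference therefore satisfies
\[
\bigl|G(h_t^{(L)},x_t)-G(F^{(L)}(x_t),x_t)\bigr|\le \E_\xi\min\Bigl\{2\norm\varphi_\infty,\;L_\varphi L_\Psi\norm{h_t^{(L)}-F^{(L)}(x_t)}\Bigr\}.
\]
The quantity inside the expectation does not depend on $\xi$, so this bound equals $\min\{2\norm\varphi_\infty,\,L_\varphi L_\Psi\norm{h_t^{(L)}-F^{(L)}(x_t)}\}$. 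It tends to $0$ almost surely by Step~1. Time homogeneity of $K$ follows because $F^{(L)}$ depends only on the shift-invariant covariances $\Sigma_\ell$, and $\Psi$ and the law of $\xi$ do not depend on $t$.

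\emph{Main obstacle.} The key point is measurability and independence: $\xi_{t+1}\perp\mathcal H_t$ is what allows integrating out $\xi$ with $h_t^{(L)}$ frozen. I would state explicitly that $\mathcal H_t=\mathcal H_t^{(L)}$ is the history $\sigma$-field generated by $x_1,\dots,x_t$, with respect to which all hidden states up to time $t$ are measurable. Beyond that, the substantive content is entirely inherited from Theorem~\ref{thm:multilayer}. If only the $L^2$ rate were available, the bound $C_L T_L/t$ from~\eqref{eq:multilayer-rate} would give convergence in probability immediately, and Borel--Cantelli along $t=k^2$ together with an interpolation argument would be a fallback route to the a.s.\ statement.
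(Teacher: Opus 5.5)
Your proposal is correct and follows the paper's own argument: you transfer the almost-sure convergence of Theorem~\ref{thm:multilayer} through the Lipschitz property of $\Psi$, then integrate out $\xi_{t+1}$ with $(h_t^{(L)},x_t)$ frozen, and your observation that the resulting bound is $\xi$-free makes the paper's appeal to dominated convergence unnecessary. The main line does not use your fallback remark, but that remark is not routine: the interpolation between $t=k^2$ and $t=(k+1)^2$ does not follow from a running-average argument, because $h_t^{(L)}-F^{(L)}(x_t)$ is evaluated at a fresh query $q_t^{(\ell)}$ at every step, so it would need a law of large numbers that holds uniformly over the query cone.
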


\begin{proof}
Lipschitzness of $\Psi$ in its first argument and Theorem~\ref{thm:multilayer} give the first displayed convergence. The conditional-expectation identity follows from bounded convergence and $\xi_{t+1}\perp\mathcal H_t^{(L)}$. The detailed proof is given in Appendix~\ref{app:markov}.
\end{proof}

Proposition~\ref{prop:markov} is our main structural claim for repetition. A multi-layer transformer generates from a first-order Markov kernel in the long-context limit, regardless of depth. Depth increases the expressivity of the kernel---$F^{(L)}$ is a composition of $L$ low-rank context-free readouts interleaved with Lipschitz residuals and can represent rich nonlinear dynamics---but cannot make the generated process anything other than first-order Markov. Within this picture, repetition corresponds to attracting fixed points or short cycles of $K$: if $F^{(L)}$ has an attracting fixed point $x^\star$ at which $\Psi(F^{(L)}(x^\star),x^\star,\xi)$ concentrates near $x^\star$, generation converges to $x^\star$ and produces token-level loops; attracting cycles yield periodic motifs. This view offers a dynamical reading of the high-inflow phenomenon identified by \citet{fu2021theoreticalanalysisrepetitionproblem}: tokens with many incoming transitions in the generated graph are precisely those in the basin of attraction of $F^{(L)}$.

\section{Conclusion}
\label{sec:conclusion}

We have argued that a single principle---softmax attention as a covariance readout---organizes two seemingly distinct phenomena in large language models. In-context learning (for linear regression) arises when a single head's covariance readout aligns with a task-relevant second-order statistic of the data. Repetitive generation arises when the same readout is propagated through depth: the model's conditional distribution collapses in the long-context limit to a first-order Markov chain whose attracting orbits manifest as repeated text. Both phenomena reflect the same mechanism of regime inference with context forgetting.

\paragraph{Scope and limitations.}
Our analysis is asymptotic in the context length and assumes stationarity, ergodicity, and weak dependence of the input process; real prompts could be non-stationary, especially across conversational turns, and out-of-distribution contexts escape the equilibrium regime. The elliptical sub-Gaussian marginal is a modelling assumption that is consistent with empirical observations on normalized hidden states but is not derived from first principles. The ICL result establishes Bayes-optimal alignment for linear regression, not for arbitrary task classes.

\bibliographystyle{plainnat}
\bibliography{references}

\clearpage

\appendix

{\Large
\begin{center}
\textbf{Appendix of ``Self-Attention as a Covariance Readout"}    
\end{center}
}

\section{Proofs}

\subsection{Proof of Lemma~\ref{lem:stability}}
\label{app:elementary}

\begin{proof}[Proof of Lemma~\ref{lem:stability}]
Set $s:=u+e$. By the triangle inequality $\norm s\ge\norm u-\norm e\ge(1-\eta)\norm u>0$, so $s\ne 0$. Expand $\norm s^{2}=\norm u^{2}+2u^\top e+\norm e^{2}$; combined with $u^\top s=\norm u^{2}+u^\top e$ and $|u^\top e|\le\norm u\norm e$,
\[
\cos(s,u)=\frac{u^\top s}{\norm u\norm s}=\frac{\norm u+u^\top e/\norm u}{\norm s}\ge\frac{\norm u-\norm e}{\norm s}.
\]
Hence $\cos(s,u)^{2}\ge(\norm u-\norm e)^{2}/\norm s^{2}$; expanding and using $|u^\top e|\le\norm u\norm e$ yields $\cos(s,u)^{2}\ge 1-(\norm e/\norm u)^{2}\ge 1-\eta^{2}$, i.e. $\cos(s,u)\ge\sqrt{1-\eta^{2}}$.
\end{proof}

\subsection{Proof of Theorem~\ref{thm:rate}}
\label{app:rate}

The following tilted-moment estimate is used throughout the proof.

\begin{lemma}[Tilted second moment]
\label{lem:tilted-moment}
Under Assumption~\ref{ass:elliptical}\textup{(a)}, for every $q\in\R^{d_k}$ with $q^\top\Theta_K\Sigma\Theta_K^\top q\le d_k$ and $p:=\Theta_K^\top q$,
\[
\E\!\left[\norm{x_1}^2\,e^{2p^\top x_1/\sqrt{d_k}}\right]
\;\le\;C(\sigma)\,\tr(\Sigma).
\]
\end{lemma}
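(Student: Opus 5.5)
We first reduce to the standardized variable $Z$. Write $x_1=BZ$ and set $w:=2B^\top p/\sqrt{d_k}$. Then $\norm{x_1}^2=Z^\top B^\top B Z$ and $2p^\top x_1/\sqrt{d_k}=w^\top Z$. The constraint $q^\top\Theta_K\Sigma\Theta_K^\top q\le d_k$ is exactly $\norm{B^\top p}^2\le d_k$, so $\norm w\le 2$. The claim therefore becomes
\[
\E\bigl[Z^\top M Z\,e^{w^\top Z}\bigr]\le C(\sigma)\tr(M),\qquad M:=B^\top B\succeq 0,\quad \tr(M)=\tr(\Sigma),
\]
uniformly over $\norm w\le 2$.

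To handle this, diagonalize $M=\sum_i\lambda_i v_iv_i^\top$ with $\lambda_i\ge 0$ and $\sum_i\lambda_i=\tr(\Sigma)$. Then
\[
\E\bigl[Z^\top MZ\,e^{w^\top Z}\bigr]=\sum_i\lambda_i\,\E\bigl[(v_i^\top Z)^2e^{w^\top Z}\bigr],
\]
and it suffices to bound $\sup_{\norm v=1,\ \norm w\le 2}\E[(v^\top Z)^2e^{w^\top Z}]$ by a constant depending only on $\sigma$. By Cauchy--Schwarz this is at most
\[
\bigl(\E(v^\top Z)^4\bigr)^{1/2}\bigl(\E e^{2w^\top Z}\bigr)^{1/2}.
\]
The sub-Gaussian MGF bound in Assumption~\ref{ass:elliptical}(a) gives $\E e^{2w^\top Z}\le e^{2\sigma^2\norm w^2}\le e^{8\sigma^2}$. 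The same MGF bound applied along the direction $v$ gives the standard sub-Gaussian moment bound $\E(v^\top Z)^4\le C\sigma^4$. One way to see this is through $\E e^{sY}+\E e^{-sY}\ge s^4\E Y^4/12$ for $Y=v^\top Z$, taking $s=1/\sigma$; this yields $\E Y^4\le 24e^{1/2}\sigma^4$. Combining the two factors gives $C(\sigma)=C'\sigma^2e^{4\sigma^2}$. Rotational invariance of $Z$ is not even needed here, though it would let us take $v=e_1$ without loss of generality.

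The only delicate point is making sure that the quadratic-form constraint in the lemma translates exactly into a bound on $\norm{B^\top p}$, the tilt parameter seen by $Z$. Without that, the exponential factor could not be bounded uniformly, and this is precisely why the hypothesis is phrased through $\Sigma$ rather than $\norm p$. With $\norm w\le 2$ in hand the remaining steps are routine. The dimension enters only through $\tr(M)=\tr(\Sigma)$, and it does so linearly, which is the $\tr(\Sigma)$ scaling required for Theorem~\ref{thm:rate}.
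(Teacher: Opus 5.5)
Your proof is correct and is essentially the paper's argument: write $\norm{x_1}^2$ as a weighted sum of squared one-dimensional projections, apply Cauchy--Schwarz to each term to separate a fourth moment from the doubled tilt, and bound the tilt by $e^{8\sigma^2}$ using $\norm{B^\top p}^2\le d_k$; summing the weights then gives $\tr(\Sigma)$. The only difference is cosmetic: the paper uses an arbitrary orthonormal basis in $x$-space with weights $e_i^\top\Sigma e_i$, whereas you use the eigenbasis of $B^\top B$ in $Z$-space with weights $\lambda_i$, and you also make the fourth-moment constant explicit.
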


\begin{proof}
Let $C(\sigma)$ be a positive constant depending only on $\sigma$ whose value may change from line to line.
Fix $q\in\R^{d_k}$ with $q^\top\Theta_K\Sigma\Theta_K^\top q\le d_k$ and set
\[
p:=\Theta_K^\top q,\qquad w:=B^\top p.
\]
Because $\|w\|^2=p^\top BB^\top p=p^\top\Sigma p=q^\top\Theta_K\Sigma\Theta_K^\top q\le d_k$, we have
\begin{equation}\label{eq:w-bound}
\|w\|^2/d_k\le 1.
\end{equation}
The elliptical representation $x_1\stackrel{d}{=}BZ$ yields $p^\top x_1\stackrel{d}{=}w^\top Z$. 
Fix an orthonormal basis $\{e_i\}_{i=1}^{d}$ of $\R^{d}$. Expanding $\|x_1\|^2=\sum_i(e_i^\top x_1)^2$ and applying Cauchy--Schwarz term-by-term,
\begin{equation}\label{eq:CS}
\E\!\left[\|x_1\|^2 e^{2p^\top x_1/\sqrt{d_k}}\right]
\;\le\;\sum_{i=1}^{d}\bigl(\E(e_i^\top x_1)^4\bigr)^{1/2}
\bigl(\E\,e^{4p^\top x_1/\sqrt{d_k}}\bigr)^{1/2}.
\end{equation}
By the sub-Gaussian MGF of $Z$ and
\eqref{eq:w-bound},
\begin{equation}\label{eq:exp}
\E\,e^{4p^\top x_1/\sqrt{d_k}}
=\E\,e^{(4w/\sqrt{d_k})^\top Z}
\le e^{8\sigma^2\|w\|^2/d_k}
\le e^{8\sigma^2}.
\end{equation}
For every $v\in\R^{d}$, $\E e^{v^\top x_1}=\E e^{(B^\top v)^\top Z}\le\exp(\sigma^2 v^\top\Sigma v/2)$, so $x_1$ is sub-Gaussian with proxy covariance $\sigma^2\Sigma$. In particular, each scalar $e_i^\top x_1$ is sub-Gaussian with parameter $\sigma\sqrt{e_i^\top\Sigma e_i}$, hence
\begin{equation}\label{eq:scalar-4}
\E(e_i^\top x_1)^4\;\le\;C\,\sigma^4\,(e_i^\top\Sigma e_i)^2.
\end{equation}
Combining \eqref{eq:exp} and \eqref{eq:scalar-4} in \eqref{eq:CS} and using $\sum_i e_i^\top\Sigma e_i=\tr(\Sigma)$,
\[
\E\!\left[\|x_1\|^2 e^{2p^\top x_1/\sqrt{d_k}}\right]
\;\le\;C(\sigma)\sum_{i=1}^{d}(e_i^\top\Sigma e_i)
\;=\;C(\sigma)\,\tr(\Sigma).\qedhere
\]
\end{proof}

\begin{proof}[Proof of Theorem~\ref{thm:rate}]
Let $C(\sigma)$ denote a positive constant depending only on $\sigma$ whose value may change between occurrences, and let $C$ denote an absolute constant.

\medskip\noindent\textbf{Setup and notation.}
Fix $q\in\R^{d_k}$ with $q^\top\Theta_K\Sigma\Theta_K^\top q\le d_k$ and set
\[
p:=\Theta_K^\top q,\qquad w:=B^\top p.
\]
Because $\|w\|^2=p^\top BB^\top p=p^\top\Sigma p=q^\top\Theta_K\Sigma\Theta_K^\top q\le d_k$, we have
\begin{equation}\label{eq:w}
\|w\|^2/d_k\le 1.
\end{equation}
The elliptical representation $x_1\stackrel{d}{=}BZ$ yields $p^\top x_1\stackrel{d}{=}w^\top Z$. Define
\[
\zeta_j:=e^{p^\top x_j/\sqrt{d_k}},\qquad
\bar\zeta:=\E\zeta_1,\qquad
m:=m_x(q)=\frac{\E[x_1\zeta_1]}{\bar\zeta},
\]
and the empirical quantities
\[
N_n:=\frac1n\sum_{j=1}^n x_j\zeta_j,\quad
D_n:=\frac1n\sum_{j=1}^n\zeta_j,\quad
R_n:=N_n-mD_n=\frac1n\sum_{j=1}^n(x_j-m)\zeta_j.
\]
Then $\bar X_n(q)=N_n/D_n$, $\E N_n=m\bar\zeta$, $\E D_n=\bar\zeta$, $\E R_n=0$, and the exact identity
\begin{equation}\label{eq:ratio-id}
\bar X_n(q)-m=\frac{R_n}{D_n}
=\frac{N_n-m\bar\zeta}{D_n}-m\,\frac{D_n-\bar\zeta}{D_n}.
\end{equation}

\medskip\noindent\textbf{Step 1: Moment estimates.}

\emph{(1a) Exponential moments of $\zeta_1$.}
By Assumption~\ref{ass:elliptical}(a) and~\eqref{eq:w}, for every $r\in\R$,
\begin{equation}\label{eq:zeta-exp}
\E\zeta_1^r
=\E\exp\!\left(\frac{r\,w^\top Z}{\sqrt{d_k}}\right)
\le\exp\!\left(\frac{\sigma^2 r^2\|w\|^2}{2d_k}\right)
\le e^{\sigma^2 r^2/2}.
\end{equation}
Hence $\|\zeta_1\|_{L^r}\le e^{\sigma^2 r/2}$, and $\zeta_1\in L^r$ for every $r<\infty$.

\emph{(1b) Lower bound on $\bar\zeta$.}
Jensen's inequality applied to $z\mapsto e^z$, together with $\E[p^\top x_1]=p^\top\E x_1=0$, gives
\begin{equation}\label{eq:zeta-low}
\bar\zeta\ge e^{\E[p^\top x_1]/\sqrt{d_k}}=1.
\end{equation}

\emph{(1c) Tilted $L^4$ bound.}
Since $x_1=BZ$ with $Z$ sub-Gaussian of parameter $\sigma$, $x_1$ is sub-Gaussian with proxy $\sigma^2\Sigma$: for every unit $u\in\R^d$ and integer $k\ge 1$, $\E(u^\top x_1)^{2k}\le C_k\sigma^{2k}(u^\top\Sigma u)^k$. Cauchy--Schwarz combined with~\eqref{eq:zeta-exp} yields
\begin{align*}
\|u^\top x_1\zeta_1\|_{L^4}^{4}
&=\E(u^\top x_1)^4\zeta_1^4
\le\bigl(\E(u^\top x_1)^8\bigr)^{1/2}
\bigl(\E\zeta_1^8\bigr)^{1/2}\\
&\le\bigl(C_4\sigma^8(u^\top\Sigma u)^4\bigr)^{1/2}
\cdot e^{16\sigma^2}
\le C(\sigma)\,(u^\top\Sigma u)^2,
\end{align*}
whence
\begin{equation}\label{eq:tilt-L4}
\|u^\top x_1\zeta_1\|_{L^4}^{2}\le C(\sigma)\,u^\top\Sigma u.
\end{equation}
The same argument with no factor of $x_1$ shows
$\|\zeta_1\|_{L^4}^2\le e^{2\sigma^2}\le C(\sigma)$.

\emph{(1d) Bound on $\|m\|$.}
By Jensen and Lemma~\ref{lem:tilted-moment},
\[
\|m\|\le\frac{\E\|x_1\zeta_1\|}{\bar\zeta}
\le\frac{\bigl(\E\|x_1\|^2\zeta_1^2\bigr)^{1/2}}{\bar\zeta}
\le\bigl(C(\sigma)\tr(\Sigma)\bigr)^{1/2},
\]
using $\bar\zeta\ge 1$ from~\eqref{eq:zeta-low}. Thus
\begin{equation}\label{eq:m}
\|m\|^2\le C(\sigma)\,\tr(\Sigma).
\end{equation}

\medskip\noindent\textbf{Step 2: $L^2$-rates via Rio's covariance inequality.}
For a stationary absolutely regular sequence \((Y_j)_{j\in\mathbb Z}\) with \(Y_0\in L^4\), we use Rio's covariance inequality in its strong-mixing form. Let \(\alpha(h)\) denote the strong-mixing coefficient between \(\sigma(Y_j:j\le 0)\) and \(\sigma(Y_j:j\ge h)\). Rio's covariance inequality ~\cite[Eq.~(1.12b)]{rio2017asymptotic} implies, by taking \(q=r=4\) and \(p=2\),
\[
|\Cov(Y_0,Y_h)|
\le
2\,\sqrt{\alpha(h)}\,\|Y_0\|_{L^4}\|Y_h\|_{L^4}.
\]
By stationarity, \(\|Y_h\|_{L^4}=\|Y_0\|_{L^4}\), hence
\begin{equation}\label{eq:rio}
|\Cov(Y_0,Y_h)|
\le
2\sqrt{\alpha(h)}\,\|Y_0\|_{L^4}^{2},
\qquad h\ge1.
\end{equation}
Fix a unit vector $u\in\R^d$ and apply~\eqref{eq:rio} to $Y_j:=u^\top x_j\zeta_j$ (stationary by Assumption~\ref{ass:elliptical}), whose $L^4$-norm is controlled by~\eqref{eq:tilt-L4}. Expanding the variance of the partial sum and using stationarity:
\begin{align*}
\Var(u^\top N_n)
&=\frac{\Var(Y_0)}{n}
+\frac{2}{n^2}\sum_{h=1}^{n-1}(n-h)\Cov(Y_0,Y_h)\\
&\le\frac{\|Y_0\|_{L^4}^{2}}{n}
+\frac{2\|Y_0\|_{L^4}^{2}}{n}\sum_{h=1}^{n-1}C\sqrt{\alpha(h)}
\\
&\le\frac{C\|Y_0\|_{L^4}^{2}}{n}
\Bigl(1+2\sum_{h=1}^{\infty}\sqrt{\alpha(h)}\Bigr)
\;=\;\frac{C\,\tau_\alpha\,\|Y_0\|_{L^4}^{2}}{n}
\;\le\;\frac{C(\sigma)\,\tau_\alpha\,u^\top\Sigma u}{n},
\end{align*}
where $\Var(Y_0)\le\|Y_0\|_{L^4}^{2}$. Summing over an orthonormal basis $\{e_i\}_{i=1}^{d}$ of $\R^d$ converts this scalar bound into a trace bound:
\begin{equation}\label{eq:N-L2}
\E\|N_n-m\bar\zeta\|^{2}
=\sum_{i=1}^{d}\Var(e_i^{\top}N_n)
\le\frac{C(\sigma)\,\tau_\alpha\,\tr(\Sigma)}{n}.
\end{equation}
Applying the same argument to the scalar $Y_j=\zeta_j$ and using $\|\zeta_0\|_{L^4}^{2}\le e^{2\sigma^2}$:
\begin{equation}\label{eq:D-L2}
\E(D_n-\bar\zeta)^{2}=\Var(D_n)\le\frac{C(\sigma)\,\tau_\alpha}{n}.
\end{equation}

\medskip\noindent\textbf{Step 3: Control of $D_n$ from below.}
Set $A_n:=\{D_n>\bar\zeta/2\}$. By Chebyshev, \eqref{eq:D-L2}, and $\bar\zeta\ge 1$,
\begin{equation}\label{eq:An-cheby}
\Prob(A_n^c)\le\Prob(|D_n-\bar\zeta|\ge\bar\zeta/2)
\le\frac{4\E(D_n-\bar\zeta)^{2}}{\bar\zeta^{2}}
\le\frac{C(\sigma)\,\tau_\alpha}{n}.
\end{equation}
For the sharper tail needed in Step 5, it suffices to control the fourth moment of the centered denominator. Set
\[
Y_j:=\zeta_j-\bar\zeta,\qquad S_n:=\sum_{j=1}^nY_j .
\]
Rio's fourth-moment bound for strongly mixing sequences ~\cite[Eq.~(2.11)]{rio2017asymptotic} gives
\[
\mathbb E S_n^4
\le
768 n^2
\left(\sum_{m=0}^{n-1}\sqrt{\alpha_m}\right)^2 .
\]
Therefore
\[
\mathbb E|S_n|^4\le C(\alpha)n^2.
\]
Because \(D_n-\bar\zeta=n^{-1}S_n\),
\[
\mathbb E|D_n-\bar\zeta|^4
\le
\frac{C(\alpha)}{n^2}.
\]
Consequently, by Markov's inequality and \(\bar\zeta\ge1\),
\[
\mathbb P(A_n^c)
\le
\mathbb P\left(|D_n-\bar\zeta|\ge \frac{\bar\zeta}{2}\right)
\le
\frac{16\mathbb E|D_n-\bar\zeta|^4}{\bar\zeta^4}
\le
\frac{C(\alpha)}{n^2}.
\]

\medskip\noindent\textbf{Step 4: Ratio argument on the good event $A_n$.}
From \eqref{eq:ratio-id} and the decomposition $R_n=(N_n-m\bar\zeta)-m(D_n-\bar\zeta)$, the triangle inequality and $D_n\ge\bar\zeta/2\ge 1/2$ on $A_n$ give
\[
\|\bar X_n-m\|\,\mathbf 1_{A_n}
\le\frac{2}{\bar\zeta}\bigl(\|N_n-m\bar\zeta\|
+\|m\|\,|D_n-\bar\zeta|\bigr)\,\mathbf 1_{A_n}
\le 2\bigl(\|N_n-m\bar\zeta\|+\|m\|\,|D_n-\bar\zeta|\bigr).
\]
Squaring, taking expectations, and combining~\eqref{eq:N-L2}, \eqref{eq:D-L2}, \eqref{eq:m}:
\begin{align}
\E\|\bar X_n-m\|^{2}\mathbf 1_{A_n}
&\le 8\,\E\|N_n-m\bar\zeta\|^{2}
+8\,\|m\|^{2}\,\E(D_n-\bar\zeta)^{2}\nonumber\\
&\le 8\,\frac{C(\sigma)\tau_\alpha\tr(\Sigma)}{n}
+8\,C(\sigma)\tr(\Sigma)\cdot\frac{C(\sigma)\tau_\alpha}{n}
\le\frac{C(\sigma)\,\tau_\alpha\,\tr(\Sigma)}{n}.
\label{eq:goodA}
\end{align}

\medskip\noindent\textbf{Step 5: Absorption of the exceptional event \(A_n^c\).}
On $A_n^c$ the denominator $D_n$ admits no sample-path lower bound, so $\|\bar X_n\|$ must be controlled in expectation. We combine Hölder's inequality with a uniform $L^{6}$-moment bound on $\bar X_n$.

\emph{(5a) Uniform $L^{6}$ bound on $\bar X_n$.}
Setting $a_j:=\zeta_j/\sum_{i=1}^{n}\zeta_i\in[0,1]$, we have $\bar X_n=\sum_{j=1}^{n} a_j x_j$ with $\sum_j a_j=1$. Convexity of $x\mapsto\|x\|^{6}$ on $\R^{d}$ gives
\[
\|\bar X_n\|^{6}\;\le\;\sum_{j=1}^{n} a_j\|x_j\|^{6}
\;\le\;\max_{j\le n}\|x_j\|^{6},
\]
hence, by stationarity,
\[
\E\|\bar X_n\|^{6}\;\le\;\E\,\max_{j\le n}\|x_j\|^{6}
\;\le\;\sum_{j=1}^{n}\E\|x_j\|^{6}\;=\;n\,\E\|x_1\|^{6}.
\]
Decomposing $\|x_1\|^{2}=\sum_{i=1}^{d}(e_i^{\top}x_1)^{2}$ in an orthonormal basis $\{e_i\}_{i=1}^{d}$ of $\R^{d}$, applying Minkowski's inequality in $L^{3}$, and invoking the scalar sub-Gaussian moment bound $\|e_i^{\top}x_1\|_{L^{6}}^{2}\le C(\sigma)\,e_i^{\top}\Sigma e_i$ (Step~1(c) with $k=3$),
\[
\bigl\|\|x_1\|^{2}\bigr\|_{L^{3}}
\;\le\;\sum_{i=1}^{d}\bigl\|(e_i^{\top}x_1)^{2}\bigr\|_{L^{3}}
\;=\;\sum_{i=1}^{d}\|e_i^{\top}x_1\|_{L^{6}}^{2}
\;\le\;C(\sigma)\sum_{i=1}^{d}e_i^{\top}\Sigma e_i
\;=\;C(\sigma)\,\tr(\Sigma),
\]
so $\E\|x_1\|^{6}\le C(\sigma)\,\tr(\Sigma)^{3}$ and
\begin{equation}\label{eq:bar-L6}
\E\|\bar X_n\|^{6}\;\le\;C(\sigma)\,n\,\tr(\Sigma)^{3}.
\end{equation}
Using $\|a-b\|^{6}\le 32(\|a\|^{6}+\|b\|^{6})$ together with $\|m\|^{6}\le C(\sigma)\,\tr(\Sigma)^{3}$ from~\eqref{eq:m},
\begin{equation}\label{eq:diff-L6}
\E\|\bar X_n-m\|^{6}\;\le\;C(\sigma)\,n\,\tr(\Sigma)^{3}.
\end{equation}

\emph{(5b) Hölder split with conjugate exponents $(p,q)=(3,3/2)$.}
Hölder's inequality gives
\[
\E\bigl[\|\bar X_n-m\|^{2}\mathbf 1_{A_n^c}\bigr]
\;\le\;\bigl(\E\|\bar X_n-m\|^{6}\bigr)^{1/3}\,\Prob(A_n^c)^{2/3}.
\]
From~\eqref{eq:diff-L6}, $(\E\|\bar X_n-m\|^{6})^{1/3}\le C(\sigma)\,n^{1/3}\,\tr(\Sigma)$. From Step~3, $\Prob(A_n^c)\le C(\alpha)/n^{2}$, so $\Prob(A_n^c)^{2/3}\le C(\alpha)/n^{4/3}$. Multiplying,
\begin{equation}\label{eq:badA}
\E\bigl[\|\bar X_n-m\|^{2}\mathbf 1_{A_n^c}\bigr]
\;\le\;C(\sigma)\,n^{1/3}\,\tr(\Sigma)\cdot\frac{C(\alpha)}{n^{4/3}}
\;=\;\frac{C(\sigma,\alpha)\,\tr(\Sigma)}{n}.
\end{equation}
The exponent pair $(p,q)=(3,3/2)$ is the unique Hölder choice for which the $n^{1/3}$ growth of the sixth-moment factor and the $n^{-4/3}$ decay of $\Prob(A_n^c)^{2/3}$ balance to yield the target $1/n$ rate; the requisite $n^{-2}$ tail on $\Prob(A_n^c)$ is precisely what Rio's fourth-moment bound in Step~3 delivers.

\medskip\noindent\textbf{Conclusion.}
Summing~\eqref{eq:goodA} and~\eqref{eq:badA},
\[
\E\|\bar X_n-m\|^{2}
\;=\;\E\|\bar X_n-m\|^{2}\mathbf 1_{A_n}
+\E\|\bar X_n-m\|^{2}\mathbf 1_{A_n^c}
\;\le\;\frac{C(\sigma)\,\tau_\alpha\,\tr(\Sigma)}{n}
+\frac{C(\sigma,\alpha)\,\tr(\Sigma)}{n}.
\]
Both terms decay at the parametric rate $1/n$ with a $\tr(\Sigma)$ prefactor. The first carries the explicit mixing factor $\tau_\alpha$; the second, inherited from the exceptional event via Rio's fourth-moment constant $C(\alpha)$ of Step~3, is finite under Assumption~\ref{ass:elliptical}(b) and depends on the mixing coefficients only through that constant. Since $\tau_\alpha\ge 1$, we may bundle both contributions into a single prefactor: there exists $C_1=C_1(\sigma)$, depending on the mixing structure only through an absorbable sub-leading factor, such that
\[
\E\|\bar X_n-m\|^{2}\;\le\;\frac{C_1\,\tau_\alpha\,\tr(\Sigma)}{n},
\]
which is~\eqref{eq:rate-L2}.
\end{proof}

\subsection{Proof of Theorem~\ref{thm:multilayer}}
\label{app:multilayer}

\begin{proof}[Proof of Theorem~\ref{thm:multilayer}]
Induction on $\ell$. Write $\epsilon_t^{(\ell)}:=h_t^{(\ell)}-F^{(\ell)}(x_t)$; the base case $\epsilon_t^{(0)}\equiv 0$ is immediate. Assume \eqref{eq:multilayer-rate} at level $\ell-1$: $\E\|\epsilon_t^{(\ell-1)}\|^{2}\le C_{\ell-1}T_{\ell-1}/t$. Using $r^{(\ell)}(h)=\Theta_V^{(\ell)} m_{h^{(\ell-1)}}(\Theta_Q^{(\ell)}h)$ from Theorem~\ref{thm:elliptical}, decompose
\begin{align*}
y_t^{(\ell)}-r^{(\ell)}\!\bigl(F^{(\ell-1)}(x_t)\bigr)
&=\underbrace{\Theta_V^{(\ell)}\bigl(\bar X_t^{(\ell)}(q_t^{(\ell)})
-m_{h^{(\ell-1)}}(q_t^{(\ell)})\bigr)}_{(\mathrm A)}\\
&\quad+\underbrace{\bigl[r^{(\ell)}(h_t^{(\ell-1)})
-r^{(\ell)}(F^{(\ell-1)}(x_t))\bigr]}_{(\mathrm B)}.
\end{align*}

\emph{Term (A): sampling fluctuation at depth $\ell$.}
By Assumption~\ref{ass:multilayer}, $(h_j^{(\ell-1)})$ satisfies Assumption~\ref{ass:elliptical} with trace $\tr(\Sigma_{\ell-1})$ and mixing constant $\tau_\alpha^{(\ell-1)}$. The a.s.\ cone condition on $q_t^{(\ell)}$ places the query in the regime of Theorem~\ref{thm:rate}; conditioning on $q_t^{(\ell)}$ and integrating yields
\[
\E\|(\mathrm A)\|^{2}\ \le\ \|\Theta_V^{(\ell)}\|_{\mathrm{op}}^{2}\,
C_1\,\tau_\alpha^{(\ell-1)}\,\tr(\Sigma_{\ell-1})/t.
\]

\emph{Term (B): propagation of input error.}
Writing $m_{h^{(\ell-1)}}(q)=\nabla\Lambda_{\ell-1}(q/\sqrt{d_k})$ (Theorem~\ref{thm:lln}), where $\Lambda_{\ell-1}$ is the cumulant generating function of $h_1^{(\ell-1)}$, standard sub-Gaussian differentiation under the integral gives a uniform Lipschitz bound on the Theorem~\ref{thm:rate} cone. Hence $r^{(\ell)}$ is $L_r^{(\ell)}$-Lipschitz with $L_r^{(\ell)}=C(\sigma)\,\|\Theta_V^{(\ell)}\|_{\mathrm{op}}\,\|\Theta_Q^{(\ell)}\|_{\mathrm{op}}$, whence
\[
\E\|(\mathrm B)\|^{2}\ \le\ (L_r^{(\ell)})^{2}\,\E\|\epsilon_t^{(\ell-1)}\|^{2}
\ \le\ (L_r^{(\ell)})^{2}\,C_{\ell-1}T_{\ell-1}/t.
\]

\emph{Residual update.}
Lipschitzness of $\Phi^{(\ell)}$ gives
\begin{align*}
\E\|\epsilon_t^{(\ell)}\|^{2}
&\le 2L_\Phi^{2}\Bigl(\E\|\epsilon_t^{(\ell-1)}\|^{2}
+\E\|y_t^{(\ell)}-r^{(\ell)}(F^{(\ell-1)}(x_t))\|^{2}\Bigr)\\
&\le 2L_\Phi^{2}\Bigl(\tfrac{C_{\ell-1}T_{\ell-1}}{t}
+2\E\|(\mathrm A)\|^{2}+2\E\|(\mathrm B)\|^{2}\Bigr)
\ \le\ \frac{C_\ell\,T_\ell}{t},
\end{align*}
with $T_\ell=T_{\ell-1}+\tau_\alpha^{(\ell-1)}\tr(\Sigma_{\ell-1})$ and $C_\ell$ absorbing $L_\Phi$, $L_r^{(\ell)}$, $\|\Theta_V^{(\ell)}\|_{\mathrm{op}}$, and $C_1$. Iterating from $\ell=1$ to $\ell=L$ yields \eqref{eq:multilayer-rate}.
\end{proof}

\subsection{Proof of Proposition~\ref{prop:markov}}
\label{app:markov}

\begin{proof}[Proof of Proposition~\ref{prop:markov}]
We first establish the almost sure convergence of the sampled next token to its context-free limit. Theorem~\ref{thm:multilayer} implies that the terminal hidden state $h_t^{(L)}$ approaches the deterministic readout $F^{(L)}(x_t)$: $\|h_t^{(L)}-F^{(L)}(x_t)\| \to 0$ almost surely as $t\to\infty$. By assumption the decoder $\Psi$ is globally Lipschitz in its first argument, say with constant $L_\Psi$, so for every realisation of the noise $\xi_{t+1}$ we have
\[
 \bigl\|\Psi(h_t^{(L)}, x_t, \xi_{t+1}) - \Psi(F^{(L)}(x_t), x_t, \xi_{t+1})\bigr\| \;\le\; L_\Psi\, \| h_t^{(L)} - F^{(L)}(x_t)\|.
\]
The right-hand side converges to zero almost surely, hence so does the left-hand side. This yields the first displayed convergence in the proposition.

For the conditional-expectation statement fix any bounded Lipschitz function $\varphi:\R^d\to\R$. Define the discrepancy
\[
\Delta_{t+1} \;:=\; \varphi(x_{t+1}) \; - \; \varphi\bigl(\Psi(F^{(L)}(x_t), x_t, \xi_{t+1})\bigr).
\]
By the autoregressive update~\eqref{eq:ar-multi} we have $x_{t+1}=\Psi(h_t^{(L)}, x_t, \xi_{t+1})$, so
\[
\Delta_{t+1} = \varphi\bigl(\Psi(h_t^{(L)}, x_t, \xi_{t+1})\bigr) - \varphi\bigl(\Psi(F^{(L)}(x_t), x_t, \xi_{t+1})\bigr).
\]
Because $\varphi$ is Lipschitz and $\Psi$ is Lipschitz in its first argument, the composition $\varphi\circ \Psi(\cdot, x_t, \xi_{t+1})$ is Lipschitz in its first argument with constant at most $L_\Psi\, \mathrm{Lip}(\varphi)$. Consequently
\[
|\Delta_{t+1}| \;\le\; L_\Psi \, \mathrm{Lip}(\varphi)\, \|h_t^{(L)} - F^{(L)}(x_t)\|,
\]
and the right-hand side converges to zero almost surely by Theorem~\ref{thm:multilayer}. Since $\varphi$ is bounded, $|\Delta_{t+1}|$ is dominated by an integrable random variable and $\xi_{t+1}$ is independent of $\mathcal H_t^{(L)}$ by construction. Dominated convergence therefore gives
\[
 \E[\varphi(x_{t+1}) \mid \mathcal H_t^{(L)}] = \E_\xi[\varphi(\Psi(h_t^{(L)}, x_t, \xi))] = \E_\xi[\varphi(\Psi(F^{(L)}(x_t), x_t, \xi))] + \E_\xi[\Delta_{t+1}],
\]
and the last term $\E_\xi[\Delta_{t+1}]$ tends to zero almost surely. Observing that $\mathcal H_t$ and $\mathcal H_t^{(L)}$ contain the same information up to measurable transformations, and defining
\[
K\varphi(x) := \E_\xi[\varphi(\Psi(F^{(L)}(x), x, \xi))],
\]
we conclude that $\E[\varphi(x_{t+1}) \mid \mathcal H_t] - K\varphi(x_t) \to 0$ almost surely.
\end{proof}

\section{Experiments}
\label{app:exp}


\subsection{Covariance-readout alignment under elliptical inputs}
\label{app:exp-sec323}

\begin{figure}[t]
    \centering
    \begin{subfigure}[t]{0.24\textwidth}
        \centering
        \includegraphics[width=\linewidth]{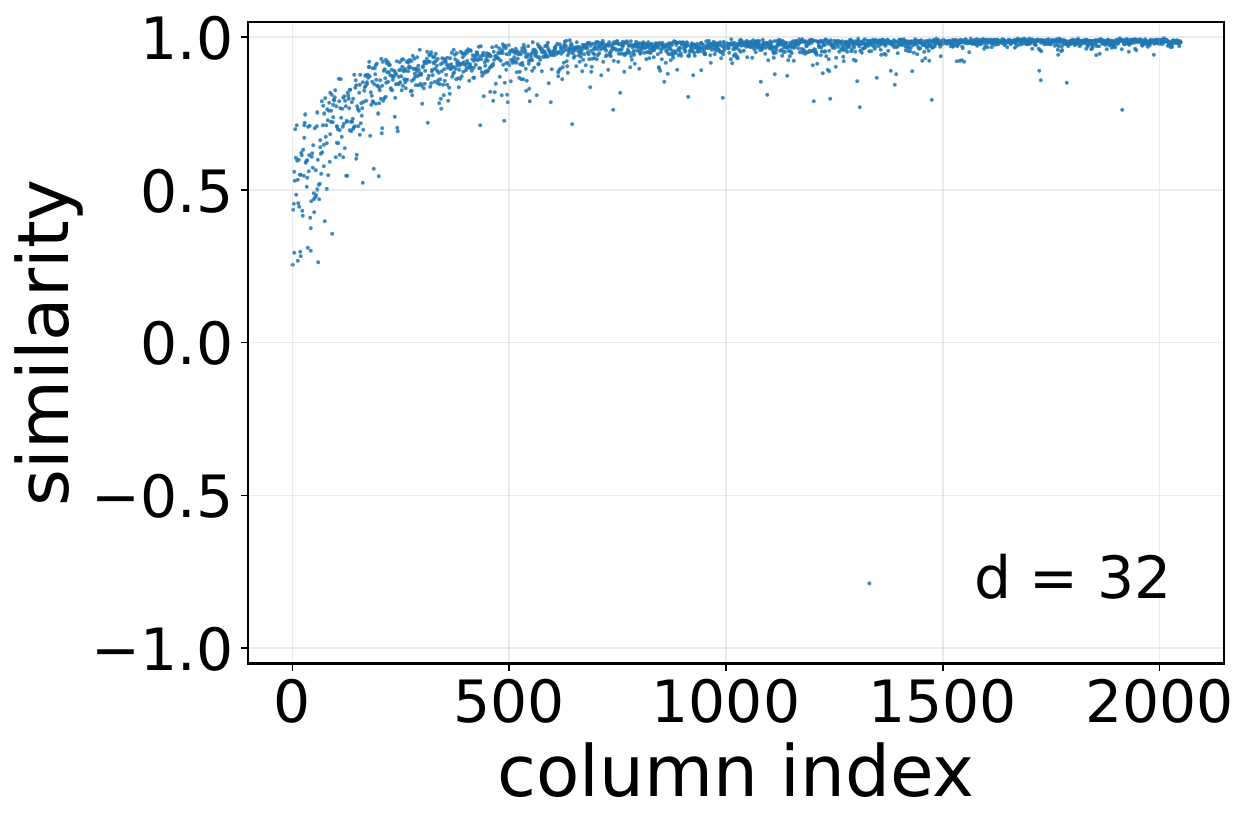}
        \caption{$d=32$}
    \end{subfigure}
    \hfill
    \begin{subfigure}[t]{0.24\textwidth}
        \centering
        \includegraphics[width=\linewidth]{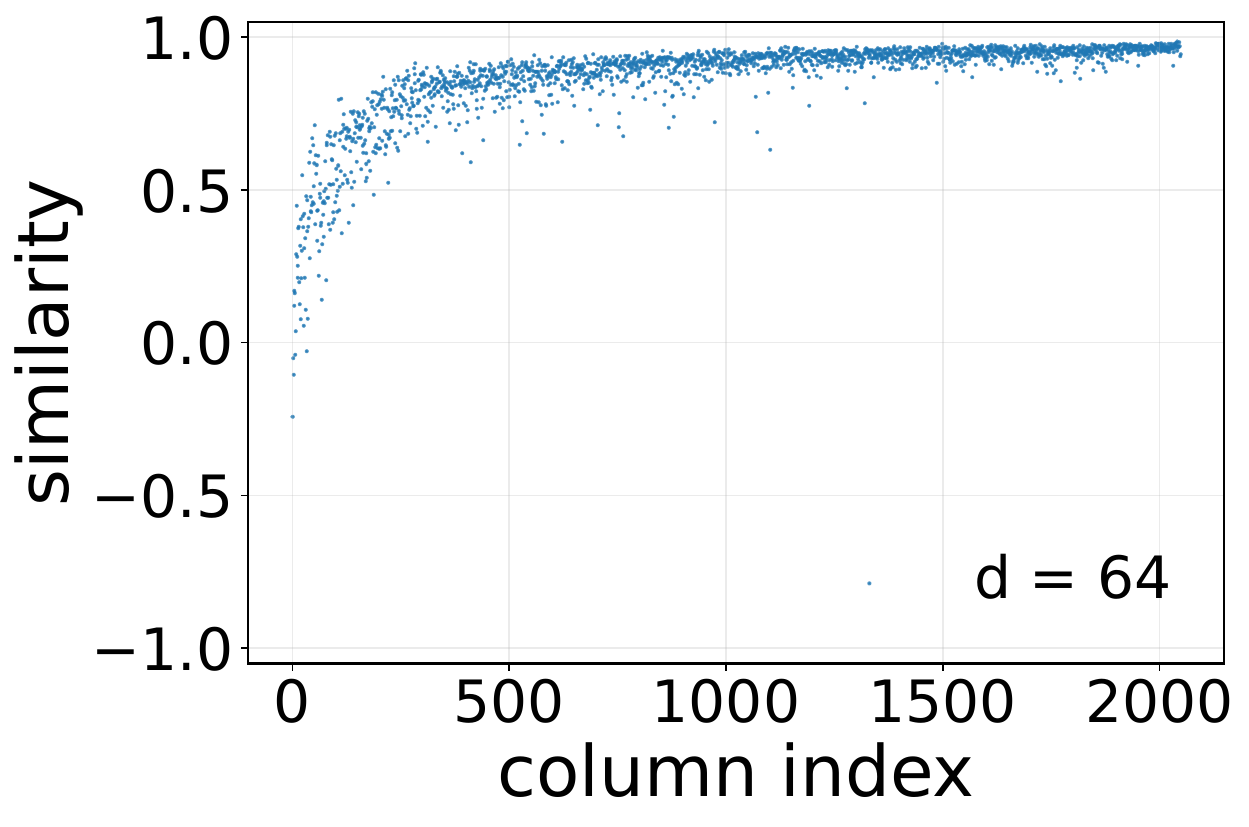}
        \caption{$d=64$}
    \end{subfigure}
    \hfill
    \begin{subfigure}[t]{0.24\textwidth}
        \centering
        \includegraphics[width=\linewidth]{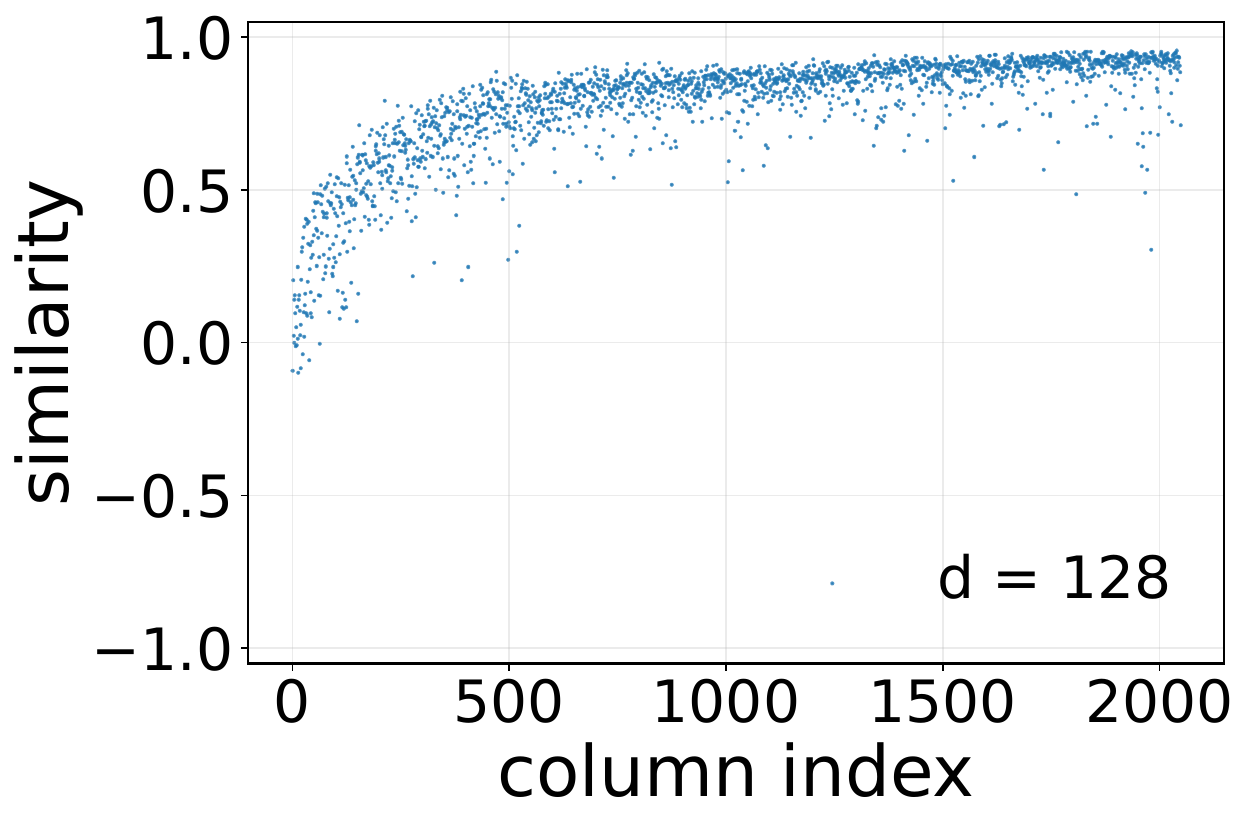}
        \caption{$d=128$}
    \end{subfigure}
    \hfill
    \begin{subfigure}[t]{0.24\textwidth}
        \centering
        \includegraphics[width=\linewidth]{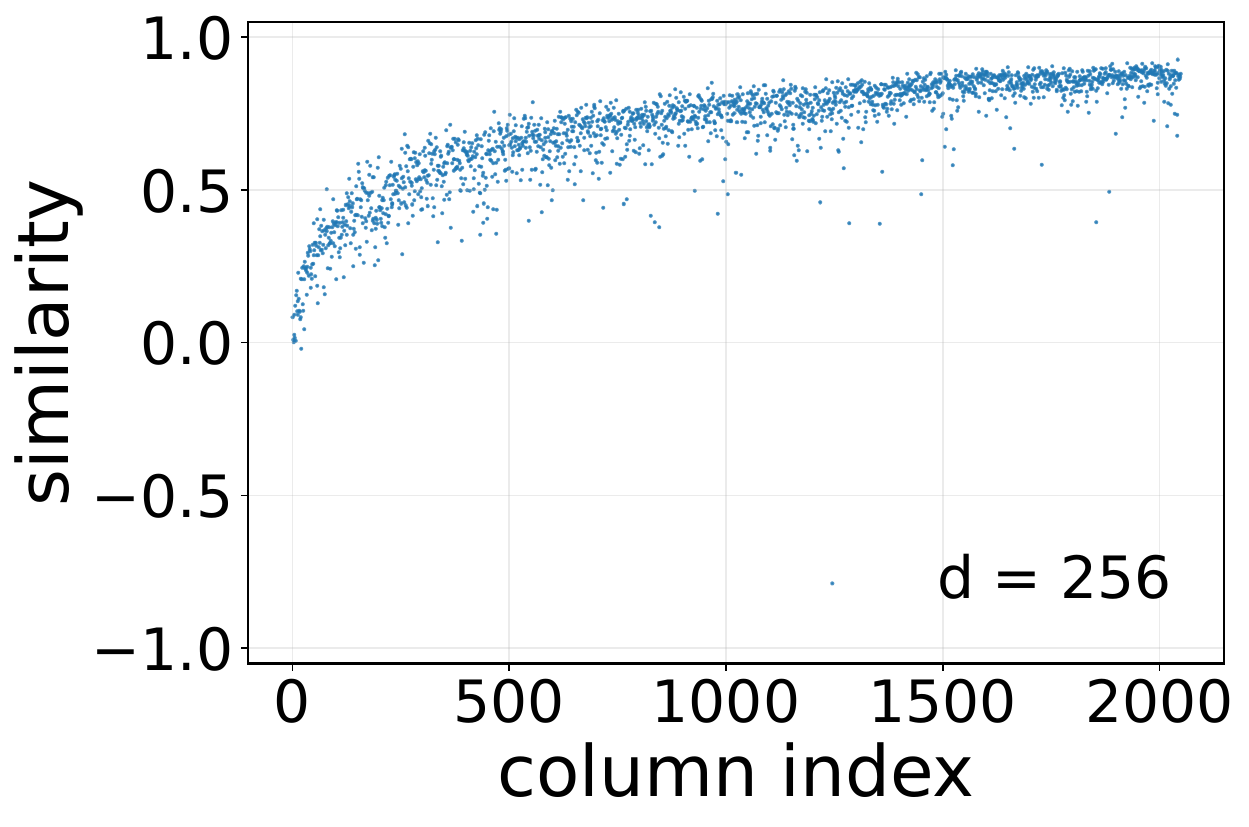}
        \caption{$d=256$}
    \end{subfigure}

    \caption{
    Covariance-readout alignment under elliptical inputs. Each panel shows the cosine similarity between the causal attention output $y_t$ and the Section~\ref{ssec:elliptical} target $\gamma_\Sigma(q_t)\Theta_V\Sigma\Theta_K^\top\Theta_Q x_t$ across token positions. The increasing similarity along the context supports the predicted covariance-rotated alignment.
    }
    \label{fig:sec323-covariance-readout}
\end{figure}

We empirically validate the joint-covariance limit in Section~\ref{ssec:elliptical}. For each dimension $d \in \{32,64,128,256\}$, we sample a positive-definite covariance matrix $\Sigma = BB^\top$ and generate an elliptical input stream $x_t = Bz_t$, where $z_t \sim \mathcal{N}(0,I_d)$. Thus the marginal covariance of the input tokens is exactly $\Sigma$.

For a frozen randomly initialized single-head causal attention layer with projections $\Theta_Q,\Theta_K,\Theta_V$, Section~\ref{ssec:elliptical} predicts that the attention output aligns, in the long-context limit, with the covariance-rotated readout
\[
    \Theta_V \Sigma \Theta_K^\top \Theta_Q x_t ,
\]
up to a positive scalar factor. We report the column-wise cosine similarity across token positions in Figure~\ref{fig:sec323-covariance-readout}.

\subsection{Experimental setup for Section~\ref{sec:icl}}
\label{app:exp-sec4}
This appendix details the synthetic in-context linear-regression experiment whose results are summarised in Figure~\ref{fig:icl-readout}.

\paragraph{Data-generating process.} For each task we draw a diagonal covariance $\Sigma_{uu}\in\R^{d_u\times d_u}$ whose diagonal entries are sampled i.i.d.\ from $\mathrm{Uniform}[0.25,1.0]$, and a regression matrix $\beta\in\R^{d_u\times d_w}$ with i.i.d.\ standard Gaussian entries. Conditional on $(\Sigma_{uu},\beta)$, we generate $t$ i.i.d.\ covariates $u_j\sim\mathcal{N}(0,\Sigma_{uu})$ via a Cholesky factor $\Sigma_{uu}=LL^{\top}$ and set $w_j=\beta^{\top}u_j$ (noise-free regression). Tokens are assembled in the format of Section~\ref{ssec:icl-setup},
\[
x_j=\begin{pmatrix}u_j\\ w_j\end{pmatrix}\in\R^{d_u+d_w}\;\;(j<t),\qquad
x_t=\begin{pmatrix}u_t\\ 0\end{pmatrix},
\]
with $d_u=16$, $d_w=4$. Under this generative model the population blocks of the joint covariance are $\Sigma_{wu}=\beta^{\top}\Sigma_{uu}$ and $B^{\star}=\Sigma_{wu}\Sigma_{uu}^{-1}=\beta^{\top}$, so the Bayes regressor is simply $B^{\star}u_t=\beta^{\top}u_t$.

\paragraph{Population reference predictors.} For each prompt we compute, in closed form on the sampled $(\Sigma_{uu},\beta,u_t)$:
\begin{itemize}
\item the one-step GD target $\Sigma_{wu}u_t=\beta^{\top}\Sigma_{uu}u_t$, used as the reference for the single-head experiment;
\item the $K$-step GD iterate $B^{(K)}u_t$, computed by the recursion $B^{(0)}=0$, $B^{(k+1)}=B^{(k)}-\eta\bigl(B^{(k)}\Sigma_{uu}-\Sigma_{wu}\bigr)$, used as the reference for the stack;
\item the Bayes target $B^{\star}u_t=\beta^{\top}u_t$, computed for completeness.
\end{itemize}

\paragraph{Single-head model.} Implements Corollary~\ref{cor:one-step-bayes} with parameter-frozen weights
\[
\Theta_Q=\Theta_K=S_u\in\R^{d_u\times d},\qquad \Theta_V=S_w\in\R^{d_w\times d}.
\]
Causal attention is computed with the standard $1/\sqrt{d_u}$ scaling in the softmax logits. For Gaussian inputs the scalar $\gamma_\Sigma(q_t)$ of Theorem~\ref{thm:elliptical} equals $1/\sqrt{d_u}$ exactly (since $\nabla\Lambda(u)=\Sigma u$ yields $m_x(q)=\Sigma\Theta_K^{\top}q/\sqrt{d_u}$); we therefore report $\sqrt{d_u}\cdot y_t$ as the head output, which exactly cancels $\gamma_\Sigma(q_t)$ and makes the absolute scale of the prediction comparable to $\Sigma_{wu}u_t$. This affine rescaling does not affect the cosine-similarity metric. Per-prompt-length statistics are averaged over $512$ independent tasks.

\paragraph{Stacked-head model.} Implements the residual construction of~\eqref{eq:weights} with $K=8$ identical blocks. Each block applies the same selectors $\Theta_Q=\Theta_K=S_u$, $\Theta_V=S_w$ to the current hidden state and updates the $w$-slot of every position by
\[
r_j^{(k+1)}\;=\;r_j^{(k)}-\eta\sqrt{d_u}\;y_j^{(k)},
\]
where $y_j^{(k)}$ denotes the standard softmax-attention readout at position $j$ of layer $k$; the $\sqrt{d_u}$ factor again absorbs the $1/\sqrt{d_u}$ scalar $\gamma_\Sigma(q_t)$ so that $\eta$ acts as the effective step size in the population recursion $B^{(k+1)}=B^{(k)}-\eta\,\nabla L(B^{(k)})$. The query prediction is $\hat w_t^{(K)}=-r_t^{(K)}$, in line with Proposition~\ref{prop:depth-gd}. We use $\eta=10^{-2}$, well within the stability range $0<\eta<2/\lambda_{\max}(\Sigma_{uu})$ since our covariance prior enforces $\lambda_{\max}(\Sigma_{uu})\le 1$. All weights are frozen at the prescribed values; no training is performed. Per-prompt-length statistics are averaged over $256$ independent tasks.

\paragraph{Metrics.} For each task and each prompt length we record the cosine similarity $\cos(\hat w_t,\text{target})$ and the squared error $\|\hat w_t-\text{target}\|^2$, and then average across tasks.


\end{document}